\documentclass{article} 
\usepackage{times}
\usepackage{arxiv}
\usepackage{natbib}[numbers,sort,compress]

\usepackage{wrapfig}

\usepackage{multirow,mathtools }

\usepackage{pifont}
\usepackage{color, colortbl}

\usepackage{blindtext}
\usepackage{lipsum}

\usepackage{multirow}
\usepackage{listings}

\usepackage{bbm}

\usepackage [english]{babel}
\usepackage[autostyle, english = american]{csquotes}

\usepackage{pifont}
\usepackage{url}
\usepackage[most]{tcolorbox}

\usepackage{lipsum}
\usepackage{soul}
\usepackage{xcolor}
\usepackage{wrapfig}
\usepackage{multirow,mathtools } 

\usepackage{adjustbox}
\MakeOuterQuote{"}

\usepackage{microtype}
\usepackage{graphicx}
\usepackage{booktabs} 

\usepackage{hyperref}

\usepackage{amsmath}

\usepackage[capitalize,noabbrev]{cleveref}

\usepackage{nicefrac}       

\usepackage{tablefootnote}

\usepackage{float} 

\usepackage{amsmath,amsfonts,bm}

\def\eqref#1{(\ref{#1})}

\def\1{\bm{1}}

\DeclareMathAlphabet{\mathsfit}{\encodingdefault}{\sfdefault}{m}{sl}
\SetMathAlphabet{\mathsfit}{bold}{\encodingdefault}{\sfdefault}{bx}{n}

\DeclareMathOperator*{\minimize}{\text{minimize}}

\newtheorem{myprop}{\bf{Proposition}}

\newcommand{\Def}[0]{\mathrel{\mathop:}=}

\usepackage{pifont}

\usepackage{color, colortbl}
\definecolor{Gray}{gray}{0.93}
\definecolor{Orange}{rgb}{1,0.5,0}
\definecolor{DGray}{gray}{0.83}
\definecolor{LightCyan}{rgb}{0.88,1,1}

\definecolor{WarnREd}{rgb}{1,0.4,0.4}
\definecolor{WarnOrange}{rgb}{1,0.682,0.502}
\definecolor{WarnPink}{rgb}{0.9176, 0.7215, 0.7215}
\definecolor{GoodGreen}{rgb}{0.5019, 0.9215, 0.6039}

\usepackage[T1]{fontenc}

\definecolor{styleblue}{HTML}{504099}
\definecolor{mypurple}{HTML}{9391ff}

\definecolor{bluegray}{rgb}{0.4, 0.6, 0.8}
\definecolor{ceruleanblue}{rgb}{0.16, 0.32, 0.75}

\hypersetup{
colorlinks=true,
citecolor=ceruleanblue,
linkcolor=ceruleanblue,
urlcolor=black
}

\usepackage{listings}
\usepackage{xcolor}
\usepackage{float}      
\usepackage{caption}    

\lstdefinestyle{simple}{
  language=Python,
  basicstyle=\ttfamily\scriptsize, 
  numbers=none,                    
  frame=single,
  rulecolor=\color{black},
  backgroundcolor=\color[gray]{0.96},
  showstringspaces=false,
  keepspaces=true,
  columns=fullflexible,
  breaklines=true,
  breakatwhitespace=true,
  tabsize=2
}

\usepackage{url}
\usepackage{xcolor} 
\definecolor{myblue}{RGB}{60,90,140} 
\usepackage{pifont}
\hypersetup{
    colorlinks=true,
    linkcolor=myblue,
    citecolor=myblue,
    urlcolor=myblue
}
\usepackage{amsmath}     
\usepackage{algorithm}    
\usepackage{algorithmic}
\usepackage{url}
\usepackage{array}      
\usepackage{booktabs}   
\usepackage{multirow}
\usepackage{enumitem}
\usepackage[skip=2pt]{caption} 
\usepackage[font=footnotesize,labelfont=footnotesize]{caption}
\usepackage[table]{xcolor} 

\usepackage{makecell}
\usepackage[most]{tcolorbox}
\usepackage{graphicx}
\usepackage{subcaption} 
\usepackage{wrapfig}
\definecolor{SL_color}{rgb}{0.858, 0.188, 0.478}

\definecolor{lightblue}{rgb}{0.88, 0.95, 1.0}
\definecolor{lightorangepink}{rgb}{1.0, 0.85, 0.8}
\definecolor{lightgreen}{rgb}{0.88, 1, 0.88}
\newcommand{\orangedot}{\textcolor{orange}{\Large\bullet}\xspace}
\usepackage{threeparttable}
\usepackage{microtype}

\everydisplay{\small}

\everydisplay{\small}

\usepackage{tikz}

\title{
Powering Up Zeroth-Order Training via Subspace Gradient Orthogonalization
}
\author{%
\textbf{Yicheng Lang}$^{1}$\quad \textbf{Changsheng Wang}$^{1}$  \quad \textbf{Yihua Zhang}$^{1}$ \quad \textbf{Mingyi Hong} $^{2}$ \\
\quad \textbf{Zheng Zhang} $^{3}$ \quad \textbf{Wotao Yin} $^{4}$ \quad \textbf{Sijia Liu}$^{1,5}$\\
$^{1}$Michigan State University~~ 
$^{2}$University of Minnesota~~ 
$^{3}$University of California, Santa Barbara\\
$^{4}$DAMO
Academy, Alibaba Group US ~~
$^{5}$IBM Research, USA
}
\date{}

\begin{document}

\maketitle

\vspace*{-8mm}
\begin{abstract}
Zeroth-order (ZO) optimization provides a gradient-free alternative to first-order (FO) methods by estimating gradients via finite differences of function evaluations, and has recently emerged as a memory-efficient paradigm for fine-tuning large-scale models by avoiding backpropagation.
However, ZO optimization 
has a fundamental tension between accuracy and query efficiency.
In this work, we show that ZO optimization can be substantially improved by unifying two complementary principles: (i) a projection-based subspace view that reduces gradient estimation variance by exploiting the intrinsic low-rank structure of model updates, and (ii) Muon-style spectral optimization that applies gradient orthogonalization to extract informative spectral structure from noisy ZO gradients. These findings form a unified framework of subspace gradient orthogonalization, which we instantiate in a new method, \textbf{ZO-Muon}, admitting a natural interpretation as a low-rank Muon optimizer in the ZO setting.
Extensive experiments on large language models (LLMs) and vision transformers (ViTs) demonstrate that ZO-Muon significantly accelerates convergence and achieves a win–win improvement in accuracy and query/runtime efficiency. Notably, compared to the popular MeZO baseline, ZO-Muon requires only 24.7\% of the queries to reach the same SST-2 performance for LLM fine-tuning, and improves accuracy by 25.1\% on ViT-B fine-tuning on CIFAR-100. 
Code is available at \url{https://github.com/OPTML-Group/ZO-Muon}.
\end{abstract}

\section{Introduction}
\label{sec: intro}
\vspace*{-0.5em}   
Training deep models such as large language models (LLMs) \citep{achiam2023gpt,zhang2022opt,touvron2023llama} relies on backpropagation (BP) to compute first-order (\textbf{FO}) gradients for optimization \citep{amari1993backpropagation,kingma2014adam}. However, as model sizes scale, the GPU memory cost of BP grows substantially \citep{chen2025a}. This motivates a fundamental question with practical impact: Can deep models be trained in a \emph{gradient-free} manner without relying on BP?
Toward this end, zeroth-order (\textbf{ZO}) optimization \citep{liu2020primer,duchi2015optimal,nesterov2017random} offers a theoretically grounded alternative to FO optimization by estimating gradients via finite differences of function values using only forward passes of the model, thereby substantially reducing the memory footprint during training \citep{malladi2023finetuning,zhang2024revisiting,tan2025harmony,liu2025sparse}.
Although ZO optimization also arises in black-box or gray-box application settings \citep{chen2017zoo,ilyas2018black,zhao2019design,chen2023deepzero} (where FO gradients are infeasible or inaccessible), \emph{memory-efficient fine-tuning of large models} has recently become one of its most impactful and arguably primary applications, as first proposed in~\citep{malladi2023finetuning} and a focus of this work.

\begin{figure*}[htb]
    \centering
    \vspace*{-2mm}    
    \begin{tabular}{cc}
      \hspace*{-4mm}\begin{tabular}{cc}
          \includegraphics[width=0.24\linewidth]{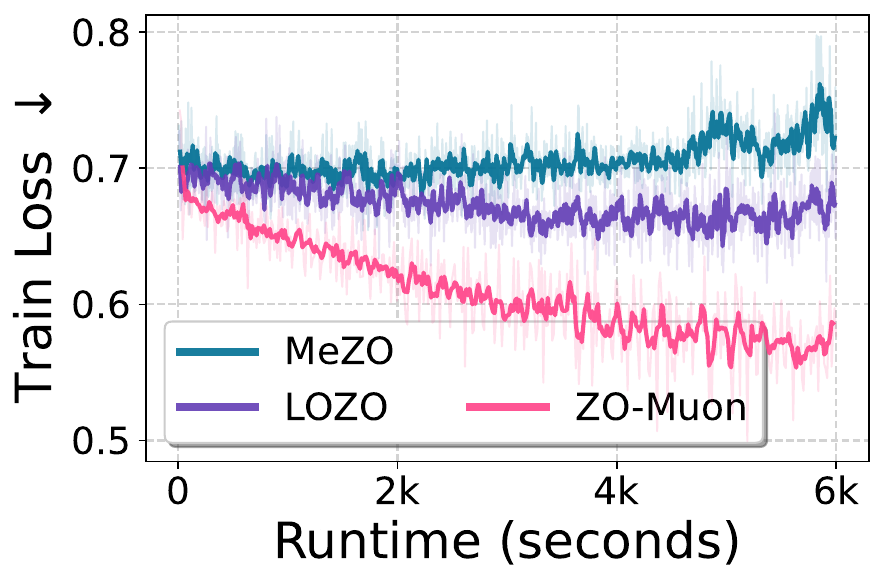} &   \hspace*{-3mm}
        \includegraphics[width=0.24\linewidth]{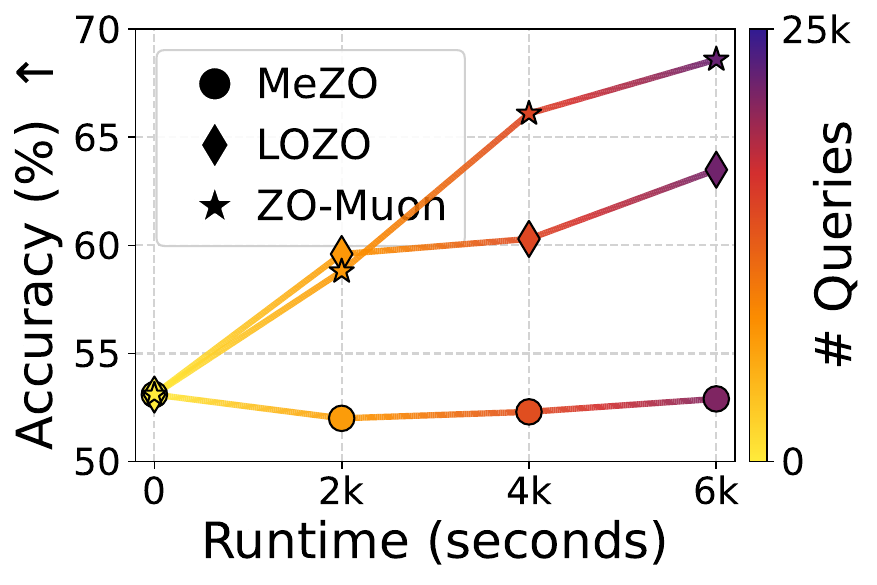}
      \end{tabular}   &  
       \hspace*{-6mm}      \begin{tabular}{cc}
          \includegraphics[width=0.24\linewidth]{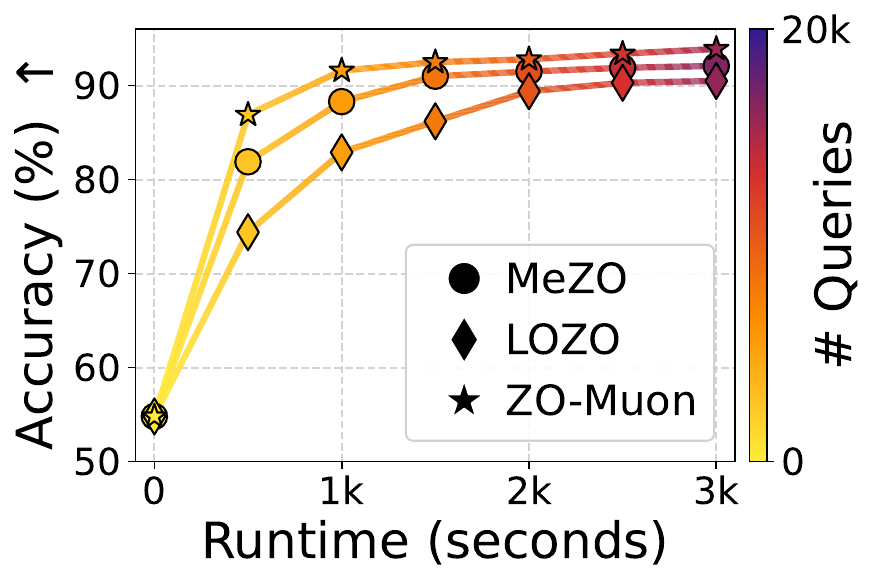} & 
           \hspace*{-3.9mm}
        \includegraphics[width=0.24\linewidth]{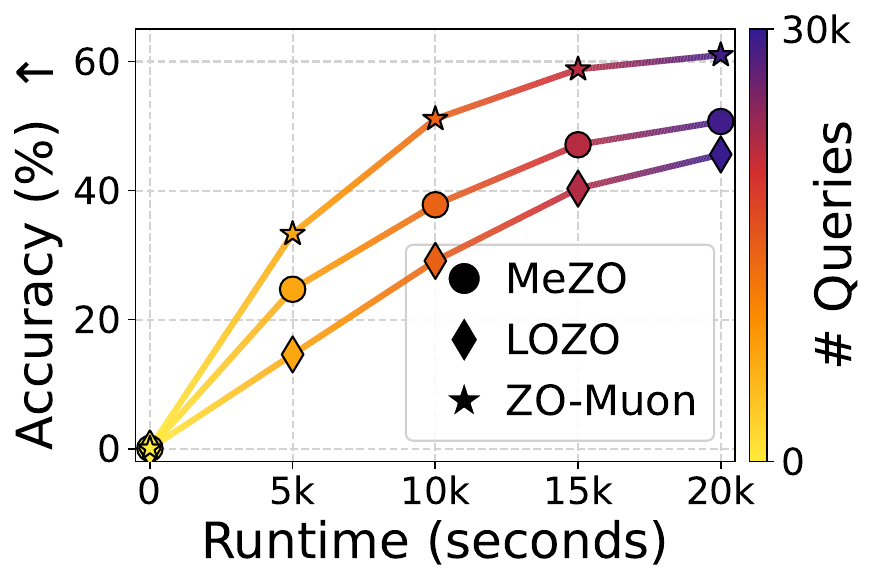} 
      \end{tabular}
      \\
     \footnotesize{(a) \textit{LLM fine-tuning}: OPT-1.3B on RTE}    &    \hspace*{-5mm} \footnotesize{(b) \textit{ViT fine-tuning}: ViT-B on CIFAR-10 (left) \& 100 (right)}
    \end{tabular}
    \caption{
 ZO-Muon achieves faster convergence and better fine-tuning accuracy than the SOTA ZO baselines MeZO \citep{malladi2023finetuning} and LOZO \citep{chen2025enhancing} against runtime. 
 \textit{(a)} OPT-1.3B fine-tuned on the RTE task: training loss (\textit{left}) and test accuracy (\textit{right}) versus runtime, with the cumulative query count indicated by the color bar in the right subplot. 
\textit{(b)} ViT-B fine-tuned on CIFAR-10 (\textit{left}) and CIFAR-100 (\textit{right}), shown in the same format as in (a, right).
}
\label{fig:motivating-figure}
\end{figure*}

\begin{figure*}[htb]
    \centering
    
    
    \begin{tabular}{cc}
        \includegraphics[width=0.48\linewidth]{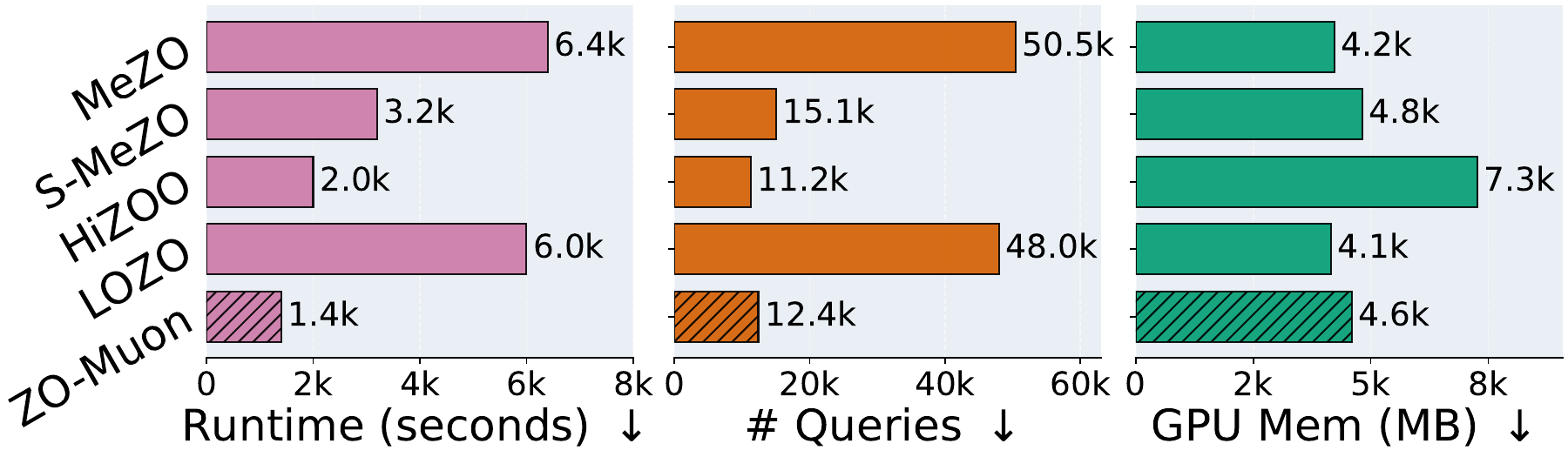} & 
        \includegraphics[width=0.48\linewidth]{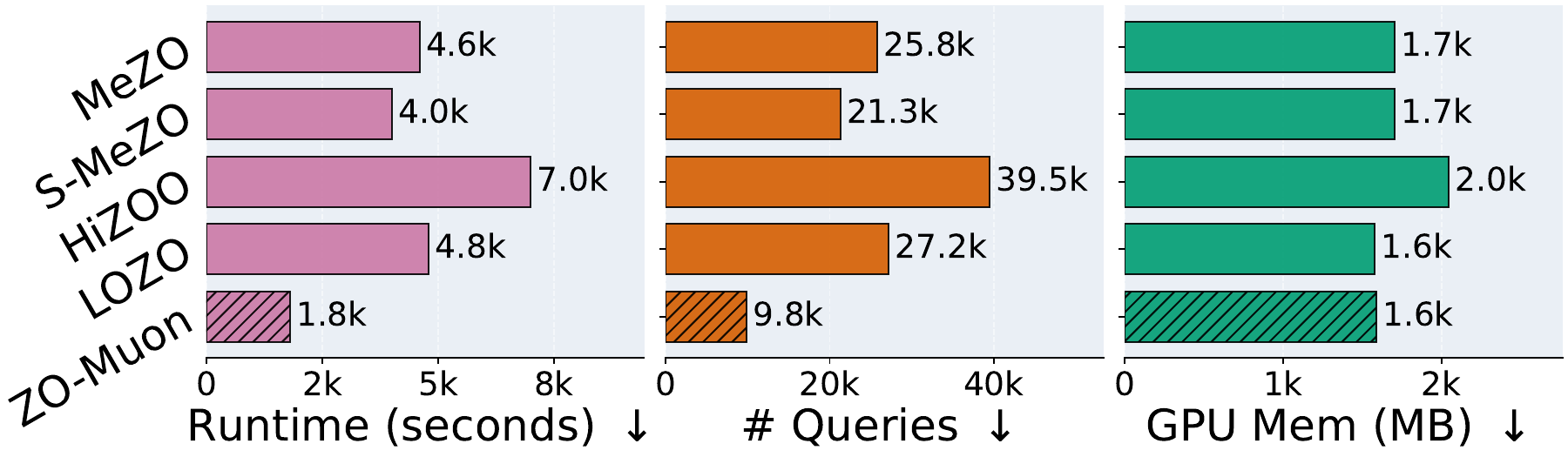} \\
        \vspace*{-1mm}
        \footnotesize{(a) Efficiency comparison for OPT-1.3B fine-tuned on SST-2} & 
        \footnotesize{(b) Efficiency comparison for ViT-B fine-tuned on CIFAR-10} \\
    \end{tabular}
    \vspace*{1mm}
    \caption{
ZO-Muon yields lower runtime and fewer function queries than all baselines (under comparable GPU memory usage), when reaching the target fine-tuning accuracies: \textit{(a)} $0.90$ on SST-2 and \textit{(b)} $0.93$ on CIFAR-10. Baselines include MeZO \citep{malladi2023finetuning}, SparseMeZO (S-MeZO) \citep{liu2025sparse}, HiZOO \citep{zhao2025secondorder}, and LOZO \citep{chen2025enhancing}.
    }
    \vspace*{-6mm}    
    \label{fig:runtime-gpu-compare}
\end{figure*}

Despite the promise of ZO optimization, achieving convergence and accuracy comparable to FO optimization remains highly nontrivial. A key challenge of ZO optimization is the high variance of query-based gradient estimation, which becomes increasingly severe as the problem dimension grows \citep{duchi2015optimal,nesterov2017random,liu2018zeroth,liu2018zeroth_SVRG,chen2023deepzero}.
Therefore, to reduce the variance of ZO gradient estimation, existing work has focused on designing improved gradient estimators. Examples include preconditioning the estimator with Hessian information \citep{zhao2025pazo,zhao2025secondorder}, optimizing perturbation directions before computing directional derivatives \citep{ma2025revisiting}, 
and imposing structural priors such as sparsity \citep{guo2024zeroth,liu2025sparse} or low-rankness \citep{chen2025enhancing,yu2025zeroth} in the gradient estimation.
However, these estimator-centric designs still primarily operate in the original parameter space of the model, which limits the extent of variance reduction. As a result, the performance gap between ZO and FO  still remains for deep model training.

Another principled approach to reducing variance in ZO optimization is to use multiple queries (at the cost of increased query complexity) to average out estimation noise. However, a linear increase in the number of queries (or runtime) yields only a \emph{sub-linear} improvement in performance (and may even lead to a {multi-query paradox} with no improvement) \citep{lin2025multi}, making it challenging to strike a favorable \textit{accuracy-efficiency trade-off}.
For example, FZOO \citep{dang2025fzoo} requires a substantial number of queries to estimate the variance of function evaluations for gradient normalization. Similarly, SVRG-based ZO methods \citep{liu2018zeroth_SVRG,ji2019improved} also require an excessive number of queries to construct error-correction terms for determining descent directions from estimated gradients.
ZO-Align \citep{lin2025multi} leverages projection alignment across multiple queries, but is difficult to scale because it must store multiple ZO gradients in large matrices.
MUZO \citep{peng2025muzo} takes not only multiple queries but also the more complex Adam optimizer in ZO optimization, which incurs additional memory overhead.

As mentioned above, although ZO model training has made substantial recent progress, its core bottlenecks, limited accuracy due to high variance and a poor accuracy-efficiency trade-off, remain unresolved. Thus, we ask:
\begin{center}
       \textbf{(Q)} 
       \textit{Can we advance ZO optimization for model training to achieve a win-win in both accuracy and efficiency?}
\end{center}
To address \textbf{(Q)}, we propose a novel \emph{subspace gradient orthogonalization} viewpoint to revisit and redesign ZO optimization. This framework consists of two key components: \emph{projection-based subspace ZO optimization} and \emph{gradient-orthogonalized spectral optimization}. The former reduces the variance of ZO gradient estimation by projecting the original full-space problem into a lower-dimensional subspace. The latter further improves optimization accuracy by leveraging gradient orthogonalization in the recent Muon optimizer \citep{jordan2024muon,liu2025muon}, which can extract meaningful descent directions from noisy ZO gradient estimates rather than directly using their entry-wise values.
We term the proposed subspace gradient orthogonalization-based ZO optimization approach ``\textbf{ZO-Muon}'' to echo its connection to Muon.
As shown in \textbf{Fig.\,\ref{fig:motivating-figure}} and \textbf{Fig.\,\ref{fig:runtime-gpu-compare}}, ZO-Muon consistently outperforms a range of state-of-the-art (SOTA) baselines in both convergence speed and accuracy. Moreover, these performance gains do not incur additional query cost under the same runtime budget, indicating improved efficiency in both runtime and query usage across LLM and ViT fine-tuning tasks. This achieves the desired win-win outcome posed in \textbf{(Q)}.

 We summarize our main \textbf{contributions} as follows.

\ding{172} (Formulation)
We revisit ZO optimization from a projection-based perspective and propose a \emph{subspace randomized gradient estimator} (Subspace RGE), which substantially reduces the variance of gradient estimation by operating in a low-dimensional subspace.

\ding{173} {(Algorithm)} We show that Muon’s \emph{gradient orthogonalization} provides complementary gains to Subspace RGE, yielding a more stable and faster multi-query ZO optimization algorithm, termed \textbf{ZO-Muon}.

\ding{174}  {(Experiment)} Compared with the stateful baselines including MeZO \citep{malladi2023finetuning}, LOZO \citep{chen2025enhancing}, SparseMeZO \citep{liu2025sparse}, HiZOO \citep{zhao2025secondorder}, and SubZero \citep{yu2025zeroth}, ZO-Muon consistently delivers superior accuracy and efficiency; see Figs.\,\ref{fig:motivating-figure}–\ref{fig:runtime-gpu-compare} for representative results.

\vspace*{-1mm}
\section{Related Work}
\label{sec: related_work}
\vspace*{-0.2em}    

\noindent \textbf{ZO optimization and applications in deep models.}
ZO optimization \citep{ghadimi2013stochastic,duchi2015optimal,nesterov2017random,balasubramanian2022zeroth,liu2020primer} estimates gradients via finite differences of function values without computing FO derivatives. As detailed in Sec.\,\ref{sec: intro} and thus omitted here for brevity, a large body of foundational work on ZO optimization has focused on reducing estimation variance to improve accuracy and the trade-off with query efficiency. In contrast, in this work we show that our approach achieves a \emph{win-win} improvement in both accuracy and efficiency for the first time.
Empirically, the success of ZO methods has been initially demonstrated in solving small-scale problems, such as input-level black-box adversarial attack and defense \citep{chen2017zoo,tu2019autozoom,chen2019zo,cheng2018queryefficient,ilyas2018black,liu2020min,zhang2022how}, input-level prompt design for foundation models \citep{sun2022black,hu2024localized,zhang2025leveraging}, and resource-limited on-device learning \citep{gu2021efficient,zhang2025foresight,tan2025perturbation}. 
More recently, the application of ZO optimization has shifted toward larger-scale deep model training, including black-box or non-differentiable-component–involved training from scratch \citep{chen2023deepzero} and memory-efficient fine-tuning \citep{zhang2024revisiting,malladi2023finetuning,tan2025harmony,liu2025sparse}.

\noindent \textbf{Subspace optimization for efficient training.}
Motivated by the observation that weights or gradients in deep model training often exhibit low-rank structure \citep{kaushik2025universal,aghajanyan2021intrinsic,hu2022lora,zhao2024galore,hao2024flora}, a growing body of work has explored training in low-dimensional subspaces to improve memory efficiency. LoRA-style methods \citep{hu2022lora,dettmers2023qlora,wang2025milora,lion2025polar} directly constrain trainable parameters to low-rank matrices, but can be less effective in some settings due to the limited expressiveness of low-rank parameterizations \citep{hao2024flora} and insufficient directional diversity \citep{lion2025polar}.
Another line of work uses projection matrices to project gradients into low-dimensional subspaces, thereby reducing the memory footprint of optimizer states for memory-efficient fine-tuning \citep{zhao2024galore,he2024subspace,liang2024memory,zhang2025i3s,refael2025sumo}. 
Subspace perspectives have also been explored in ZO optimization. LOZO \citep{chen2025enhancing} adopts two low-dimensional Gaussian matrices for perturbation and model updates, while SubZero \citep{yu2025zeroth} leverages two column-orthonormal random matrices to sketch FO gradients and then estimate them in a much lower-dimensional space. In addition, \citep{kozak2023zeroth,nozawa2025zeroth} provide theoretical formulations of ZO methods from a subspace optimization perspective, but do not consider practical large-scale model fine-tuning.
In contrast, we propose a subspace projection–based view to revisit and redesign ZO optimization, and reveal a mutually beneficial synergy between subspace ZO and Muon-style gradient orthogonalization.

\noindent \textbf{Emerging gradient-orthogonalized methods for deep model training.}
Recent years have witnessed growing interest in \emph{gradient-orthogonalized} optimization methods for training large-scale deep models, particularly foundation models, driven by the need to exploit the matrix-structured and spectral properties of gradients that are poorly captured by conventional vector-based optimizers.
The Muon optimizer~\citep{jordan2024muon} and its subsequent variants~\citep{liu2025muon,ma2025swan,riabinin2025gluon,khaled2025muonbp,he2025low,ahn2025dion,amsel2025polar} embody this shift by performing \emph{gradient orthogonalization} (\textbf{GO}), which projects gradient updates onto a spectrally constrained set via matrix whitening and can be interpreted as a steepest-descent step under a spectral-norm constraint~\citep{bernstein2024old}. This spectral treatment leads to faster convergence and reduced optimizer-state memory compared to Adam, enabling Muon to scale to foundation-model training, most notably in Moonlight~\citep{liu2025muon}, a 3B/16B Mixture-of-Experts (MoE) model.

Despite this progress, GO has remained largely confined to \emph{FO pre-training}, and extending it to ZO settings is nontrivial due to the lower quality of ZO gradient estimates. JAGUAR-Muon \citep{petrov2025leveraging} attempts to apply GO to full-space ZO momentum, but incurs additional memory and runtime overhead over MeZO \citep{malladi2023finetuning} and shows very limited performance gain.
This is likely because applying GO to noisy, high-dimensional ZO estimates amplifies non-informative spectral components, a limitation also confirmed by our experiments. By contrast, we show that GO becomes effective for ZO optimization \emph{only when coupled with a subspace formulation}, which we term \emph{subspace gradient orthogonalization}.

\vspace*{-2mm} 
\section{Rethinking ZO Optimization: From Full Space to A Projected Subspace View}
\vspace*{-0.2em} 
\label{sec:formulation}

In this section, we first introduce the standard ZO optimization formulation, whose core algorithmic component is a query-based ZO gradient estimator built from finite differences of function values. We then develop a new projection-based subspace perspective, from which we derive a subspace ZO gradient estimator and motivate its effectiveness for improving ZO accuracy and query efficiency.

\noindent \textbf{Standard ZO optimization in the full space.}
Consider a standard minimization problem with variable $\mathbf{X} \in \mathbb{R}^{m \times n}$:
\begin{align}
\begin{array}{ll}
\displaystyle \minimize_{\mathbf{X}} & f(\mathbf{X}),
\end{array}
\label{eq:prob_0}
\end{align}
where, instead of vectorizing the variable $\mathbf{X}$ (\textit{e.g.}, model weights or weight perturbations to a pre-trained model for fine-tuning), we use its matrix form for notational consistency throughout the paper. Here, $f$ denotes the objective function (\textit{e.g.}, the fine-tuning loss for a downstream task).

To solve problem~\eqref{eq:prob_0}, the standard ZO optimization approach replaces the  gradient used in FO methods  with a query-based ZO gradient estimator constructed from finite differences of function values~\citep{liu2020primer}.
A standard choice is the \emph{randomized gradient estimator} (\textbf{RGE}) \citep{nesterov2017random}, whose effectiveness for memory-efficient LLM fine-tuning has been demonstrated by MeZO \citep{malladi2023finetuning}, which is built upon ZO-SGD.
\textit{A generic form of the RGE} \citep{duchi2015optimal} is given by
\begin{align}
    \hat{\nabla}_{\mathbf{X}} f(\mathbf{X}) 
   = \frac{1}{N_{\mathrm{q}}} \sum_{i=1}^{N_{\mathrm{q}}} \frac{f(\mathbf{X} + \mu \boldsymbol{\Psi}_i) - f(\mathbf{X})}{\mu}\,\boldsymbol{\Psi}_i,
    \label{eq:RGEv0}
\end{align}
where $\hat{\nabla}_{\mathbf{X}} f(\mathbf{X})$ denotes the estimated gradient of $f$ with respect to (w.r.t.) $\mathbf{X}$, 
$N_{\mathrm{q}}$ denotes the number of queries, $\boldsymbol{\Psi}_i \in \mathbb{R}^{m \times n}$ is a random perturbation matrix applied to the optimization variable $\mathbf{X}$ whose entries are independently drawn from the standard Gaussian distribution, and $\mu > 0$ is a small smoothing (perturbation) parameter.
As shown in~\eqref{eq:RGEv0}, the RGE is constructed using the \emph{forward-difference} approximation of function values for each query. An alternative variant adopts the \emph{central-difference} scheme,
$
\frac{f(\mathbf{X} + \mu \boldsymbol{\Psi}) - f(\mathbf{X} - \mu \boldsymbol{\Psi})}{2\mu}\,\boldsymbol{\Psi}
$,
which is used by MeZO~\citep{malladi2023finetuning}. In this work, we prefer the forward-difference estimator when $N_{\mathrm{q}} > 1$ due to its lower query complexity:  the RGE in~\eqref{eq:RGEv0} requires only $(N_{\mathrm{q}} + 1)$ function queries, compared to $2N_{\mathrm{q}}$ queries for the central-difference scheme.
When $N_{\mathrm{q}} = 1$, the query complexity of the two schemes is the same; therefore, we use the central-difference estimator.

The RGE \eqref{eq:RGEv0} provides an unbiased estimator of the gradient of a smoothed version of the original objective, \textit{i.e.}, $\mathbb{E}_{\boldsymbol{\Psi}}\!\left[ f(\mathbf{X} + \mu \boldsymbol{\Psi}) \right]$, and its estimation variance scales as $O\!\left({mn}/{N_{\mathrm{q}}}\right)$ \citep{liu2020primer}.
Consequently, a fundamental limitation of RGE is its high gradient estimation variance, which scales proportionally with the problem dimension and can become prohibitively large for training large-scale models. In theory, mitigating this variance requires using a large number of queries $N_{\mathrm{q}}$, leading to an unfavorable variance-query trade-off.

\noindent \textbf{Subspace ZO optimization: A \textit{projection}-based view.}
A growing body of evidence suggests that, in deep model training, the weights and their gradients evolve within a much lower-dimensional subspace \citep{aghajanyan2021intrinsic,hu2022lora,zhao2024galore,kaushik2025universal}. 

Therefore, it is natural to ask \textit{whether one can transition ZO optimization from the full space to a  subspace}, where the RGE can be obtained with significantly reduced variance due to the reduced effective problem dimension.

Motivated by the above, we propose to approximate the original problem~\eqref{eq:prob_0} via a projected variant, under the implicit assumption that the optimization variable $\mathbf{X}$ admits a low-dimensional representation   after projection.
To be concrete, let $\mathbf{P} \in \mathbb{R}^{m \times r}$ denote a projection matrix whose $r$ columns (with $r \ll \min \{ m, n \}$)  are \textit{orthonormal}, \textit{i.e.}, $\mathbf{P}^\top \mathbf{P} = \mathbf{I}_r$, where $\mathbf{I}_r \in \mathbb{R}^{r \times r}$ denotes the identity matrix.
We then represent the original variable $\mathbf{X}$ using a lower-dimensional representation $\mathbf{Z} \in \mathbb{R}^{r \times n}$ over the subspace spanned by the columns of $\mathbf{P}$, yielding the approximation $\mathbf X \approx \mathbf P \mathbf Z$. We choose $\mathbf{P}$ to be a \emph{random column-orthonormal} projection matrix (that can be readily obtained via QR decomposition of a random Gaussian matrix) and approximate \eqref{eq:prob_0} as:
\begin{align}
   \displaystyle \minimize_{\mathbf{Z} \in \mathbb{R}^{r\times n}} \; \mathbb{E}_{\mathbf{P}}\!\left[ f(\mathbf{P}\mathbf{Z}) \right].
   \label{eq:subspace_prob}
\end{align}
We will show that the above provides a $\mathbf{P}$-projected view of \eqref{eq:prob_0} that is sufficient to guide effective ZO optimization. 

To further understand the relationship between the original problem~\eqref{eq:prob_0} and its projected approximation~\eqref{eq:subspace_prob}, we examine the FO (stochastic)  gradient of the latter w.r.t. the lower-dimensional variable $\mathbf{Z}$. This yields
\begin{align}
   \underbrace{ \nabla_{\mathbf{Z}} f(\mathbf{P}\mathbf{Z}) = \mathbf{P}^\top \nabla_{\mathbf{X}} f(\mathbf{X}) }_{\text{\ding{172} gradient in $\mathbf{Z}$ space}}, ~~ 
   \underbrace{ \mathbf{P} \nabla_{\mathbf{Z}} f(\mathbf{P}\mathbf{Z}) = \mathbf{P}\mathbf{P}^\top \nabla_{\mathbf{X}} f(\mathbf{X}) }_{\text{\ding{173} gradient lifting back to $\mathbf{X}$ space}}.
   \label{eq:proj_sub_grad_FO}
\end{align}
Here, the transition from \ding{172} to \ding{173} indicates that the subspace gradient $\nabla_{\mathbf{Z}} f(\mathbf{P}\mathbf{Z})$, when lifted back to the full space via the same projection matrix $\mathbf{P}$, yields a projected full-space gradient $\mathbf{P}\mathbf{P}^\top \nabla_{\mathbf{X}} f(\mathbf{X})$. Defining $\hat{\mathbf{P}} \Def \mathbf{P}\mathbf{P}^\top \in \mathbb R^{m \times m}$, we note that $\hat{\mathbf{P}}$ is a projection matrix that maps any vector onto the subspace spanned by the columns of $\mathbf{P}$ and satisfies the idempotence property $\hat{\mathbf{P}}^2 = \hat{\mathbf{P}}$ (due to $\mathbf{P}^\top \mathbf{P} = \mathbf{I}$). 

The key implications of~\eqref{eq:proj_sub_grad_FO} for ZO optimization are as follows. Under the projected formulation~\eqref{eq:subspace_prob}, we can apply the RGE in~\eqref{eq:RGEv0} directly in the reduced $\mathbf{Z}$-space to obtain a dimension-reduced (and hence variance-reduced) ZO gradient estimate $\hat{\nabla}_{\mathbf{Z}} f(\mathbf{P}\mathbf{Z})$, corresponding to the ZO realization of step~\ding{172} in~\eqref{eq:proj_sub_grad_FO}. This is  given by
\begin{align}
    \hat{\nabla}_{\mathbf{Z}} f(\mathbf{P}\mathbf Z) 
    = \frac{1}{N_{\mathrm{q}}} \sum_{i=1}^{N_{\mathrm{q}}} \frac{f(\mathbf{X} + \mu \mathbf P {\boldsymbol{\Psi}}_i) - f(\mathbf{X})}{\mu}\,\boldsymbol{\Psi}_i ,
    \label{eq:RGE_subspace}
\end{align}
where in contrast to the standard RGE in~\eqref{eq:RGEv0}, the perturbation $\boldsymbol{\Psi}_i \in \mathbb{R}^{r \times n}$ is applied to the lower-dimensional subspace variable $\mathbf{Z}$, and we write the function value $f(\mathbf{P}\mathbf{Z})$ using the exact correspondence  between $\mathbf{P}\mathbf{Z}$ and $\mathbf{X} $. 

The advantage of the ZO gradient estimator in \eqref{eq:RGE_subspace} w.r.t. $\mathbf{Z}$ lies in its reduced estimation variance.
Next, to obtain a full-space gradient estimate, we leverage step~\ding{173} in~\eqref{eq:proj_sub_grad_FO} to lift the subspace ZO gradient in~\eqref{eq:RGE_subspace} back to the original space, yielding a ZO gradient estimate  w.r.t.  $\mathbf{X}$, which we term \textbf{Subspace RGE} due to its reliance on~\eqref{eq:RGE_subspace}:
\begin{align}
    \hat{\nabla}_{\mathbf{X}} f(\mathbf X) 
    \approx & \mathbf P \hat{\nabla}_{\mathbf{Z}} f(\mathbf{P}\mathbf Z) \nonumber \\
   = & \mathbf P   \left [ \frac{1}{N_{\mathrm{q}}} \sum_{i=1}^{N_{\mathrm{q}}} \frac{f(\mathbf{X} + \mu \mathbf P \boldsymbol{\Psi}_i) - f(\mathbf{X})}{\mu}\,\boldsymbol{\Psi}_i \right ]
   \nonumber 
   \\
   = & \frac{1}{N_{\mathrm{q}}} \sum_{i=1}^{N_{\mathrm{q}}} \frac{f(\mathbf{X} + \mu \mathbf P \boldsymbol{\Psi}_i) - f(\mathbf{X})}{\mu}\, \mathbf P \boldsymbol{\Psi}_i.
   \label{eq:RGE_subspace_X}
\end{align}
As indicated by \eqref{eq:proj_sub_grad_FO},  the Subspace RGE \eqref{eq:RGE_subspace_X} serves as the ZO counterpart of the projected FO gradient $\mathbf{P}\mathbf{P}^\top \nabla_{\mathbf{X}} f(\mathbf{X})$, rather than of the full gradient.
Although this introduces an estimation bias at first glance, in practice it can actually lead to improved ZO optimization performance due to two key factors: (i) the variance reduction benefit enjoyed by \eqref{eq:RGE_subspace}, and (ii) the intrinsic subspace structure of gradients during model training which makes subspace projection effectively lossless, as will be validated below.

\begin{figure}[htb]
    \begin{tabular}{c c  c }
        \centering   
       \hspace*{24mm} \includegraphics[width=0.2\columnwidth]{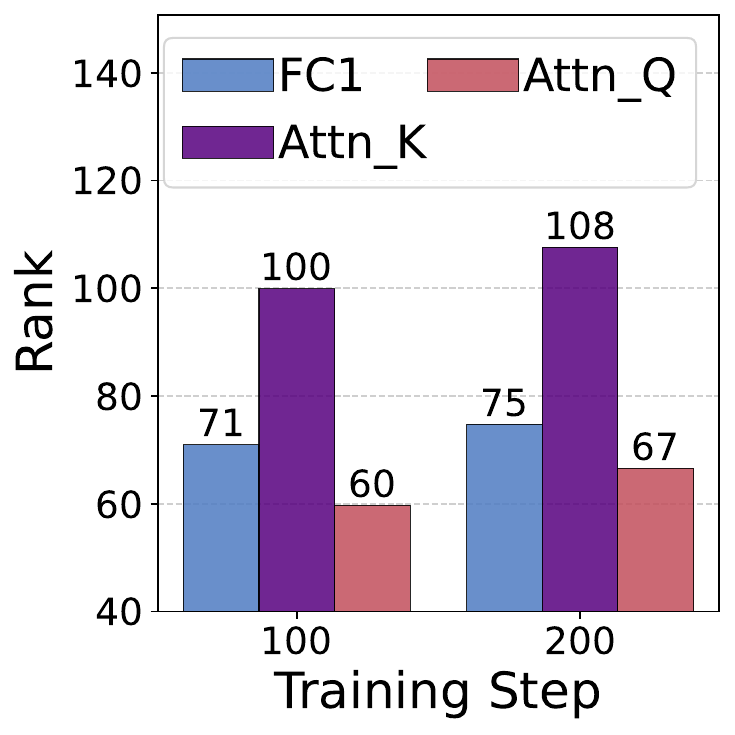} &      \hspace*{-0mm} 
        \includegraphics[width=0.2\columnwidth]{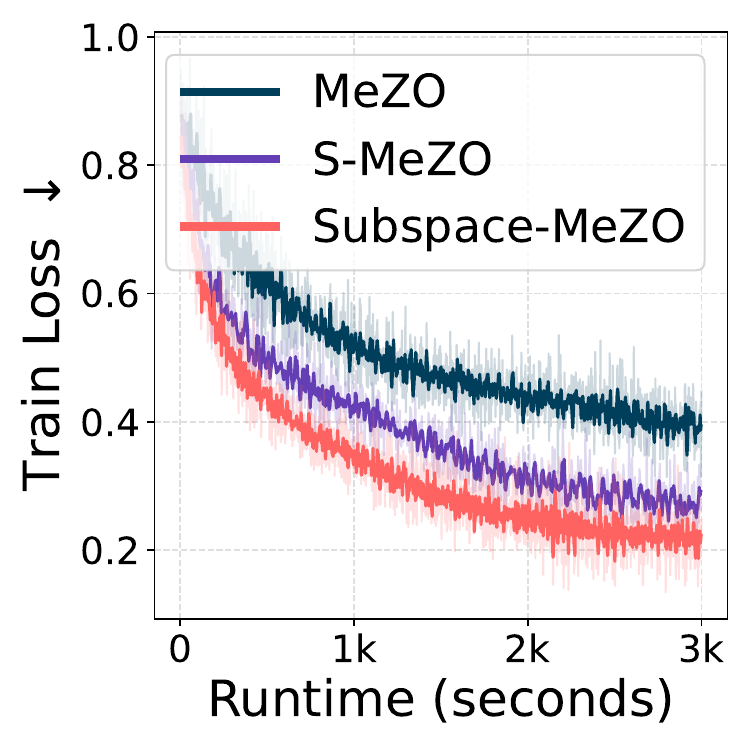} &      \hspace*{-0mm} 
        \includegraphics[width=0.2\columnwidth]{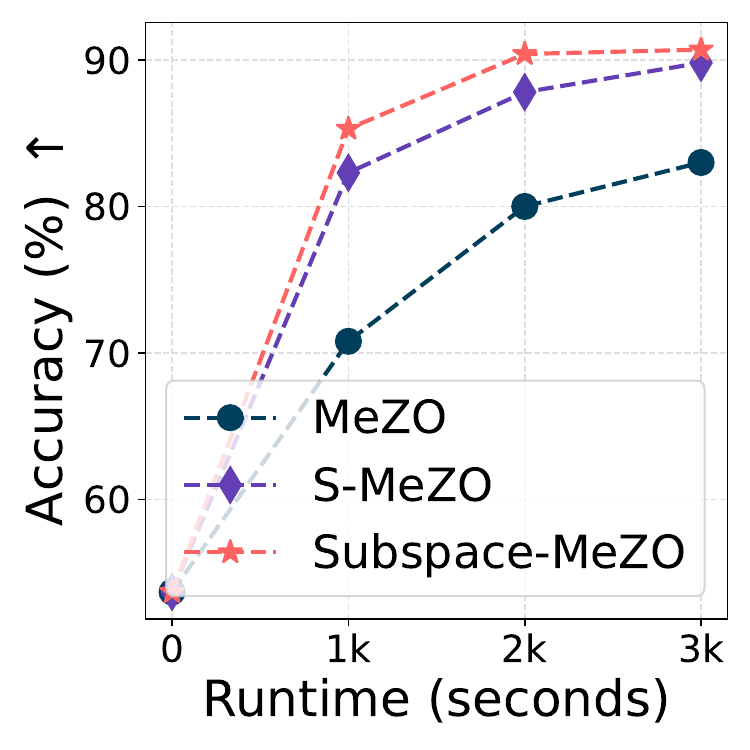} \vspace*{-1mm}\\
       \hspace*{28mm} \footnotesize{(a) Rank at FO training} &  \hspace*{-0mm} 
        \footnotesize{(b) Training loss} & \hspace*{-0mm} 
        \footnotesize{(c) Fine-tune acc.} \\
    \end{tabular}
    \caption{The necessity and advantages of Subspace RGE, illustrated by fine-tuning OPT-1.3B on SST-2: (a) The low-rank structure of model components during FO training with Adam across steps; (b) Performance comparison of Subspace RGE–based MeZO (\textit{i.e.}, Subspace-MeZO) with the SOTA baselines SparseMeZO (\textit{i.e.}, S-MeZO) and  MeZO; And (c)    accuracy vs. runtime. 
    }
    \label{fig:RGE_subspace}
\end{figure}


\textbf{Fig.\,\ref{fig:RGE_subspace}} provides a motivating example based on fine-tuning OPT-1.3B on the SST-2 task, which illustrates the above rationale for using the Subspace RGE in~\eqref{eq:RGE_subspace_X} instead of the vanilla RGE in~\eqref{eq:RGEv0}.
Fig.\,\ref{fig:RGE_subspace}(a) shows that the FO gradients during Adam-based training exhibit a pronounced low-rank structure, where the effective rank is measured by the number of top singular values required to capture $99.99\%$ of the spectral energy (see \textbf{Appendix~\ref{appendix:intrinsic-rank}} for details). 
In addition, 
we refer to the integration of Subspace RGE (at $N_q = 1$) into MeZO as \textbf{Subspace-MeZO}, and include MeZO \citep{malladi2023finetuning} and SparseMeZO \citep{liu2025sparse} as baselines; see \textbf{Sec.~\ref{sec: experiment-setup}} for experimental setups.
As shown, Subspace-MeZO achieves both faster convergence (Fig.\,\ref{fig:RGE_subspace}(b)) and better accuracy (Fig.\,\ref{fig:RGE_subspace}(c)).

\noindent \textbf{Comparison with prior work:  A closer look at Subspace RGE.}
The last equality in~\eqref{eq:RGE_subspace_X} suggests that Subspace RGE can be implemented in the same form as the standard RGE \eqref{eq:RGEv0}, but using $\mathbf{P}\boldsymbol{\Psi}$ as the effective perturbation which admits a low-rank factorization into $\mathbf{P} \in \mathbb{R}^{m \times r}$ and $\boldsymbol{\Psi} \in \mathbb{R}^{r \times n}$ since $r \ll \min\{m, n\}$. Here, we omit the query index of $\boldsymbol{\Psi}$ for brevity. 
This perspective connects the proposed Subspace RGE to the existing \underline{lo}w-rank \underline{ZO} optimization approach,  \textbf{LOZO}~\citep{chen2025enhancing}.
We provide a detailed comparison between our proposal and LOZO below.

LOZO~\citep{chen2025enhancing} incorporates low-rank perturbations into the (central-difference) RGE in~\eqref{eq:RGEv0}, yielding the \emph{low-rank gradient estimator} (LGE).
\begin{align}
   & \hat{\nabla}_{\mathbf{X}} f(\mathbf{X}) = \frac{f(\mathbf{X} + \mu  \mathbf{A}\mathbf{B}) - f(\mathbf{X} - \mu  \mathbf{A}\mathbf{B})}{2\mu}  \mathbf{A}\mathbf{B},
    \label{eq:LOZO}
\end{align}
where $\mathbf{A} \in \mathbb{R}^{m \times r}$ and $\mathbf{B} \in \mathbb{R}^{r \times n}$ are two random matrices from standard Gaussian distributions with $r \ll \min\{m, n\}$. The overall perturbation $\mathbf{A}\mathbf{B}$ thus gives a low-rank, factorized approximation of the full-space perturbation $\boldsymbol{\Psi}$ in~\eqref{eq:RGEv0}.

A comparison between the LGE in~\eqref{eq:LOZO} and the proposed Subspace RGE in~\eqref{eq:RGE_subspace_X} shows that the two estimators take essentially the same form, up to a swap between the left perturbation matrices $\mathbf{A}$ and $\mathbf{P}$, and between the right perturbation matrices $\mathbf{B}$ and $\boldsymbol{\Psi}$. However, beyond this formal similarity, there exists a \emph{key difference} in the choice of the (left) perturbation matrix: 
Subspace RGE enforces $\mathbf{P}$ to be \textit{column-orthonormal}, whereas LGE imposes no such constraint on $\mathbf{A}$.
This distinction is critical because $\mathbf{P}$ in Subspace RGE admits a natural interpretation as a subspace projector, which is precisely why it estimates the projected gradient $\hat{\mathbf{P}} \nabla_{\mathbf{X}} f(\mathbf{X})$ in step~\ding{173} of~\eqref{eq:proj_sub_grad_FO}, where $\hat{\mathbf{P}} = \mathbf{P}\mathbf{P}^\top$. 
Recall that $\hat{\mathbf{P}}$ is a projection matrix satisfying the idempotence property, and it is easy to verify that $(\mathbf{I} - \hat{\mathbf{P}})$ is also a projection matrix onto the subspace orthogonal to the column space of $\mathbf{P}$. 
The above indicates that the full gradient naturally admits the decomposition
$
\nabla_{\mathbf{X}} f(\mathbf{X}) 
= \hat{\mathbf{P}} \nabla_{\mathbf{X}} f(\mathbf{X}) 
+ (\mathbf{I} - \hat{\mathbf{P}})\nabla_{\mathbf{X}} f(\mathbf{X})
$, 
where $(\mathbf{I} - \hat{\mathbf{P}})$ is the complementary projection.
Therefore, the use of a column-orthonormal matrix $\mathbf{P}$ in Subspace RGE endows the ZO method with a clear geometric interpretation about the FO gradient component it aims to estimate. By contrast, LOZO does not admit such an interpretation.

In addition to LOZO, SubZero~\citep{yu2025zeroth} also develops a low-rank perturbation-based RGE similar to~\eqref{eq:LOZO}.
Although, like us, SubZero considers column-orthonormal projections motivated by subspace optimization \citep{nozawa2025zeroth,kozak2023zeroth}, its use of projection is conceptually different from our viewpoint. In our framework, as formalized by steps~\ding{172}--\ding{173} and the derivation of~\eqref{eq:RGE_subspace_X}, we treat subspace optimization as an explicit intermediate step: we first perform RGE in a reduced subspace and then lift the resulting gradient estimate back to the full space to solve the original ZO problem. This positioning is the key distinction between our work and prior approaches   with similar algebraic forms.
Furthermore, SubZero adopts two projection matrices, whose generation via QR decomposition could incur non-negligible runtime overhead, making such double-projection schemes less computationally efficient.

\begin{figure}[htb]
    \centering
    \vspace*{-1mm}
    \begin{tabular}{c  c}
        \includegraphics[width=0.3\columnwidth]{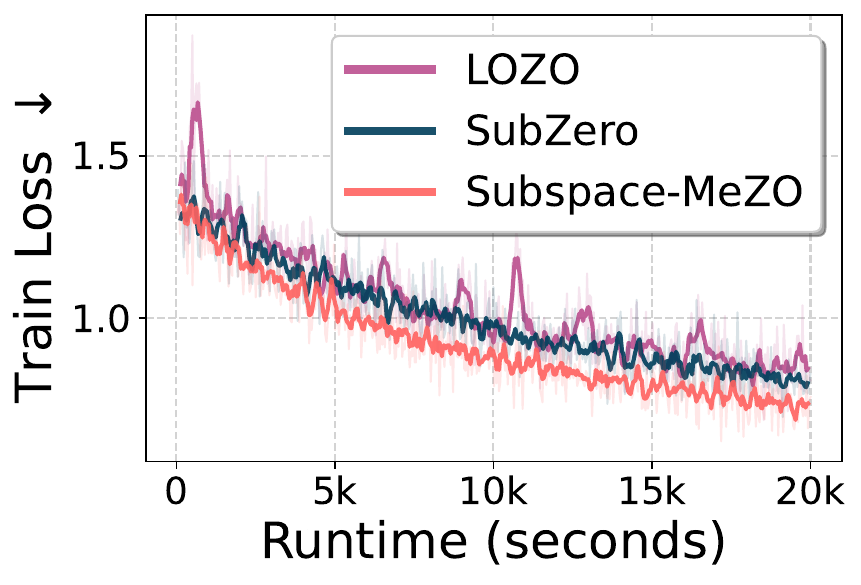} & 
        \includegraphics[width=0.3\columnwidth]{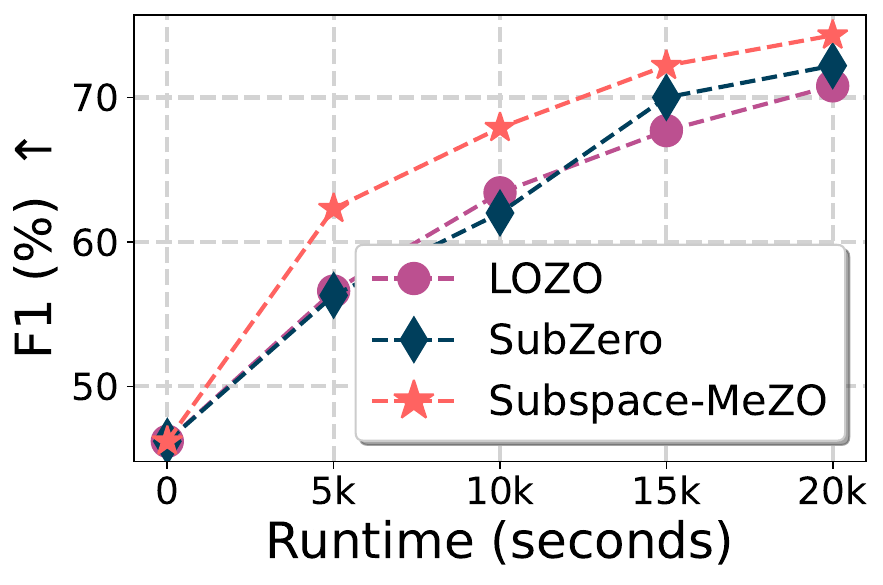} \vspace*{-1mm}\\
         \footnotesize{(a) Training loss vs. runtime} & 
        \footnotesize{(b) Fine-tune performance} \\
    \end{tabular}
    \caption{Training (a) and testing (b) performance comparison of the proposed Subspace RGE–based MeZO (\textit{i.e.}, Subspace-MeZO) with existing low-rank perturbation–based ZO baselines, LOZO \citep{chen2025enhancing} and SubZero \citep{yu2025zeroth}, for fine-tuning OPT-13B on SQuAD whose performance is measured by F1 (\%).
    }
    \vspace*{-1mm}
    \label{fig:subspace-mezo}
\end{figure}

In \textbf{Fig.\,\ref{fig:subspace-mezo}}, we validate the advantage of the proposed subspace RGE within the Subspace-MeZO framework when fine-tuning OPT-13B on the SQuAD task.
As shown, even when compared against LOZO and SubZero (in addition to MeZO and S-MeZO in Fig.\,\ref{fig:RGE_subspace}), Subspace-MeZO achieves the fastest convergence (Fig.\,\ref{fig:subspace-mezo}(a)) and the best fine-tuning performance (Fig.\,\ref{fig:subspace-mezo}(b)) under the same runtime budget.

\vspace*{-1mm}
\section{ZO-Muon: Boosting Subspace ZO via Gradient Orthogonalization}
\vspace*{-1mm}
\label{sec:zo-muon-formulation}
In this section, we show that \emph{gradient orthogonalization} (\textbf{GO}), the core ingredient underlying the recently popular optimizer Muon~ \citep{jordan2024muon,liu2025muon}, can be used to further boost the subspace ZO method  in Sec.\,\ref{sec:formulation}.


\noindent \textbf{Gradient orthogonalization in Muon: From FO to ZO, with insights from the failure of ZO-Muon-V0.}
Before introducing ZO-Muon, we briefly recall the key idea behind FO Muon. Muon can be interpreted as a steepest-descent method under a spectral-norm constraint \citep{bernstein2024old}, or more broadly as a spectral optimization method that exploits the \emph{matrix-level spectral structure} of descent directions rather than their \emph{entry-wise} information.

Concretely, given the iterate $\mathbf{X}_t$ of the optimization variable $\mathbf{X}$ at iteration $t$ and the learning rate $\eta_t$, the \textbf{basic Muon} (without momentum) takes the form \citep{jordan2024muon}:
\begin{align}
    \mathbf{X}_{t+1} = \mathbf{X}_{t} - \eta_t \, \mathrm{msign}(\mathbf{G}_t),  
    \label{eq:Moun}
\end{align}
where $\mathbf{G}_t $ denotes the (stochastic) gradient at iteration $t$, and $\mathrm{msign}(\cdot)$ denotes the matrix sign function, also known as matrix whitening transformation. The quantity $\mathrm{msign}(\mathbf{G}_t)$ can be interpreted as the descent direction obtained by projecting $\mathbf{G}_t$ onto the spectral-norm unit ball \citep{khaled2025muonbp}, which admits the  closed-form expression:
\begin{align}
   \mathrm{msign}(\mathbf G) 
   = \mathbf G (\mathbf G^{\top} \mathbf G)^{-\frac{1}{2}} 
  = \mathbf U \, \mathrm{sign}(\boldsymbol{\Sigma}) \, \mathbf V^\top
  = \mathbf U_{[:,:k]} \, \mathbf V_{[:,:k]}^\top,
      \label{eq:matrix_sign}
\end{align}
where $\mathbf{G} = \mathbf{U}\boldsymbol{\Sigma}\mathbf{V}^\top$ is the singular value decomposition (SVD) of $\mathbf{G}$, and $\mathrm{sign}(\cdot)$ denotes the (entry-wise) sign function, returning $1$ for non-zero (positive) singular values and $0$ otherwise. Moreover, $\mathbf{U}_{[:, :k]} \in \mathbb{R}^{m \times k}$ and $\mathbf{V}_{[:, :k]} \in \mathbb{R}^{n \times k}$ denote the submatrices of $\mathbf{U}$ and $\mathbf{V}$ consisting of their first $k$ columns, respectively, where $k$ is the \textit{rank} of $\mathbf{G}$.

As shown in \eqref{eq:matrix_sign}, the $\mathrm{msign}$ operation plays the role of GO by equalizing all active singular directions and producing a spectrally isotropic update. This leads to the key {benefit} in Muon, which enables the model update to fully explore and exploit the spectral descent directions. In addition, Muon provides an efficient numerical approximation of $\mathrm{msign}(\cdot)$ via Newton--Schulz (\textbf{NS}) iterations \citep{bernstein2024old}, avoiding the need to compute a full SVD.

Inspired by Muon’s spectral advantage, one may naturally ask whether exploiting such spectral information in the ZO gradient estimator can also bring similar benefits as in FO optimization. Accordingly, by replacing $\mathbf{G}_t$ with its standard RGE counterpart in~\eqref{eq:RGEv0},  we obtain a vanilla ZO-Muon variant, which we denote as \textbf{ZO-Muon-V0}, \textit{i.e.}, applying GO directly to MeZO.


\begin{wrapfigure}{r}{0.3\columnwidth} 
    \centering
    \vspace*{-3mm}    
\includegraphics[width=\linewidth]{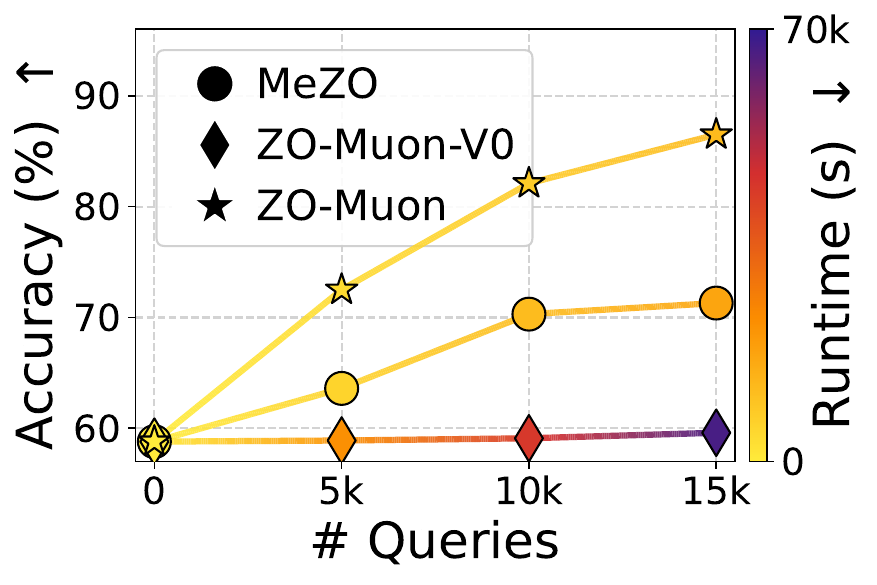} 
    \vspace*{-5mm}
    \caption{Demonstration of the ineffectiveness of ZO-Muon-V0 vs. MeZO and the effectiveness of ZO-Muon through fine-tuning OPT-13B on SST-2.}
    \label{fig:ZO_Muon_V0}
    \vspace*{-5mm}
\end{wrapfigure}
However, we find that ZO-Muon-V0 does \textit{not} provide additional benefits over the GO-\textit{absent} baseline, MeZO. As shown in \textbf{Fig.\,\ref{fig:ZO_Muon_V0}}, when fine-tuning OPT models on SST-2, ZO-Muon-V0 \textit{underperforms} MeZO in both accuracy and runtime efficiency under the same query budget.
This \textit{accuracy} underperformance is likely because directly applying GO to noisy ZO gradient estimates amplifies spurious directions during matrix whitening, thereby degrading optimization performance.
From a \textit{runtime} perspective, performing the NS iterations for GO also introduces additional computational overhead compared to MeZO, \textit{i.e.}, RGE-based ZO-SGD. In addition to ZO-Muon-V0, we also evaluate the ineffectiveness of {JAGUAR-Muon} \citep{petrov2025leveraging}, another direct attempt to integrate Muon into ZO optimization; see \textbf{Fig.\,\ref{fig:full-space-appendix}} in \textbf{Appendix\,\ref{appendix:ineffectiveness}}.

\noindent \textbf{ZO-Muon: How and why.}
As motivated above, it is \emph{nontrivial} to effectively leverage Muon’s spectral advantages in the ZO setting. In what follows, we show that the \emph{subspace ZO} formulation in Sec.\,\ref{sec:formulation} is more suitable for integration with Muon.
Our rationale is that Subspace RGE \eqref{eq:RGE_subspace} induces a \emph{low-rank} structure by estimating gradients in a low-dimensional subspace. When combined with GO, this effectively acts as a low-rank spectral filter that suppresses non-informative noise in the RGE. From an efficiency standpoint, the cost of the $\mathrm{msign}$ operation in \eqref{eq:matrix_sign} scales with the matrix rank \citep{he2025low}, so integrating GO with subspace ZO also improves computational efficiency. As confirmed by ZO-Muon in Fig.\,\ref{fig:ZO_Muon_V0}, \textbf{an accuracy-efficiency win-win emerges when GO meets subspace ZO}.

We next formally develop ZO-Muon.
Recalling the projection-based view used to derive Subspace RGE in~\eqref{eq:RGE_subspace_X} from~\eqref{eq:proj_sub_grad_FO}, we can integrate GO (\textit{i.e.}, the $\mathrm{msign}$ operation) with the subspace RGE in~\eqref{eq:RGE_subspace} on the lower-dimensional variable $\mathbf{Z}$, and then lift the resulting gradient-orthogonalized RGE back to the full space w.r.t. the original variable $\mathbf{X}$.
This leads to \textbf{ZO-Muon} that modifies the Muon \eqref{eq:Moun} to
\begin{align}
      &  \hspace*{-3mm} \mathbf O_t = \overbrace{ \mathbf P  \mathrm{msign}\left ( \underbrace { \frac{1}{N_{\mathrm{q}}} \sum_{i=1}^{N_{\mathrm{q}}} \frac{f(\mathbf{X}_t + \mu \mathbf P \boldsymbol{\Psi}_i) - f(\mathbf{X}_t)}{\mu}\,\boldsymbol{\Psi}_i }_\text{RGE $ \hat{\nabla}_{\mathbf{Z}} f$ within subspace in \eqref{eq:RGE_subspace}, like step \ding{172} in \eqref{eq:proj_sub_grad_FO} } \right )}^\text{Lifting GO update back to $\mathbf{X}$ space, like step \ding{173} in \eqref{eq:proj_sub_grad_FO} } 
      \hspace*{-5mm}
      \label{eq:ZO_GO}
      \\     &   \hspace*{-3mm}   \mathbf{X}_{t+1} = \mathbf{X}_{t} - \eta_t \, \mathbf O_t.
      \label{eq:ZO_Muon}
\end{align}
Here, 
compared to the basic Muon \eqref{eq:Moun}, $\mathbf{O}_t$ can be understood as the projection-based realization of the gradient-orthogonalized subspace RGE.
See a full algorithmic description of   ZO-Muon   in \textbf{Appendix\,\ref{appendix:zo-muon-formulation}}.


Furthermore, we provide a theoretical underpinning of the proposed ZO-Muon in~\eqref{eq:ZO_GO}--\eqref{eq:ZO_Muon} from a  low-rank  Muon perspective. As shown in~\eqref{eq:ZO_GO}, the RGE inside the $\mathrm{msign}$ operator prior to GO corresponds to an approximation of the subspace-projected gradient $\mathbf{P}^\top \nabla_{\mathbf{X}} f(\mathbf{X})$, as given by \eqref{eq:RGE_subspace} and the step~\ding{172} in~\eqref{eq:proj_sub_grad_FO}.
Therefore, the FO interpretation of the ZO-Muon step \eqref{eq:ZO_GO} yields
\begin{align}
    \mathbf O =  \mathbf P  \mathrm{msign} (\mathbf{P}^\top  \mathbf G), ~~ \mathbf G = \nabla_{\mathbf{X}} f(\mathbf{X}),
    \label{eq:FO_GO_proj}
\end{align}
where, for notational simplicity, we omit the iteration index $t$.
Clearly, in the absence of the projection $\mathbf{P} \in \mathbb{R}^{m \times r}$, the step~\eqref{eq:FO_GO_proj} reduces to the standard GO update in Muon \eqref{eq:Moun}.
Since $r \ll \min\{m, n\}$, the GO step  \eqref{eq:FO_GO_proj} is applied to the low-rank matrix $\mathbf{P}^\top \mathbf G$, rather than to the full gradient $\mathbf G \in \mathbb R^{m \times n }$.
Proposition\,\ref{prop:low-rank-GO} below shows \textit{when} the GO step \eqref{eq:FO_GO_proj} becomes equivalent to to the original GO step  \eqref{eq:matrix_sign}.

\begin{myprop}
\label{prop:low-rank-GO}
   If the projection is chosen as $\mathbf{P} = \mathbf{U}_{[:, :k]} \in \mathbb{R}^{m \times k}$, obtained from the SVD of $\mathbf{G}$ in~\eqref{eq:matrix_sign}, then the projection for gradient orthogonalization is \emph{lossless}. That is,
    \begin{align}
       \mathbf P  \mathrm{msign} (\mathbf{P}^\top  \mathbf G) =  \mathrm{msign}(\mathbf G).
       \label{eq:GO_lowrank_proof}
    \end{align}
\end{myprop}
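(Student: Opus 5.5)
The plan is a direct computation using the compact SVD of $\mathbf G$. Write $\mathbf G = \mathbf U_k \boldsymbol{\Sigma}_k \mathbf V_k^\top$, where $\mathbf U_k = \mathbf U_{[:,:k]}$, $\mathbf V_k = \mathbf V_{[:,:k]}$, $\boldsymbol{\Sigma}_k = \mathrm{diag}(\sigma_1,\dots,\sigma_k)$ with all $\sigma_i>0$, and $k$ is the rank of $\mathbf G$. This is the SVD from \eqref{eq:matrix_sign} with the zero singular values discarded. By \eqref{eq:matrix_sign}, the right-hand side of \eqref{eq:GO_lowrank_proof} is $\mathrm{msign}(\mathbf G)=\mathbf U_k\mathbf V_k^\top$.

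First, compute the projected gradient with $\mathbf P=\mathbf U_k$. Since $\mathbf U_k^\top\mathbf U_k=\mathbf I_k$, we get $\mathbf P^\top\mathbf G = \boldsymbol{\Sigma}_k\mathbf V_k^\top \in\mathbb R^{k\times n}$. This matrix has rank $k$ and full row rank. Its compact SVD is $\mathbf I_k\,\boldsymbol{\Sigma}_k\,\mathbf V_k^\top$, because $\mathbf I_k$ and $\mathbf V_k$ both have orthonormal columns and $\boldsymbol{\Sigma}_k$ is diagonal with positive entries. Applying \eqref{eq:matrix_sign} to this factorization gives $\mathrm{msign}(\mathbf P^\top\mathbf G)=\mathbf I_k\mathbf V_k^\top=\mathbf V_k^\top$. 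Left-multiplying by $\mathbf P=\mathbf U_k$ then yields $\mathbf U_k\mathbf V_k^\top=\mathrm{msign}(\mathbf G)$, which is \eqref{eq:GO_lowrank_proof}.

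The only delicate point is that $\mathrm{msign}$ must be well defined, i.e.\ independent of which SVD is chosen, so that we may evaluate it on the particular factorization $\mathbf I_k\boldsymbol{\Sigma}_k\mathbf V_k^\top$. I would settle this with the polar-type characterization $\mathrm{msign}(\mathbf A)=\mathbf A(\mathbf A^\top\mathbf A)^{+1/2}$, using the pseudo-inverse square root, which is intrinsic to $\mathbf A$. From this one also sees that $\mathbf U_k\mathbf V_k^\top$ does not depend on the choice of singular vectors, even when singular values are repeated.

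As a cross-check, I would verify the identity directly from this formula. We have $(\mathbf P^\top\mathbf G)^\top(\mathbf P^\top\mathbf G)=\mathbf V_k\boldsymbol{\Sigma}_k^2\mathbf V_k^\top=\mathbf G^\top\mathbf G$, since $\mathbf P\mathbf P^\top$ acts as the identity on the column space of $\mathbf G$. Hence
\begin{align*}
\mathbf P\,\mathrm{msign}(\mathbf P^\top\mathbf G)=\mathbf P\mathbf P^\top\mathbf G(\mathbf G^\top\mathbf G)^{+1/2}=\mathbf G(\mathbf G^\top\mathbf G)^{+1/2}=\mathrm{msign}(\mathbf G),
\end{align*}
which confirms the claim without any SVD bookkeeping.
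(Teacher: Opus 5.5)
Your proof is correct, and your cross-check is essentially the paper's argument. The paper starts from $\mathrm{msign}(\mathbf A)=\mathbf A(\mathbf A^\top\mathbf A)^{-1/2}$ and writes $\mathbf P\,\mathrm{msign}(\mathbf P^\top\mathbf G)=\mathbf P\mathbf P^\top\mathbf G\,(\mathbf G^\top\mathbf P\mathbf P^\top\mathbf G)^{-1/2}$. It then uses only $\mathbf P\mathbf P^\top\mathbf G=\mathbf G$ for $\mathbf P=\mathbf U_{[:,:k]}$.

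Your primary route reads off the compact SVD $\mathbf I_k\boldsymbol\Sigma_k\mathbf V_k^\top$ of $\mathbf P^\top\mathbf G$. It is more explicit, but it needs the well-definedness remark you supply. The polar-form route avoids SVD bookkeeping and shows that the identity holds for any column-orthonormal $\mathbf P$ whose range contains the column space of $\mathbf G$.

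Your use of the pseudo-inverse square root improves on the paper's write-up. Whenever $k<n$, which is the low-rank regime the proposition targets, both $\mathbf G^\top\mathbf G$ and $\mathbf G^\top\mathbf P\mathbf P^\top\mathbf G$ are singular $n\times n$ matrices. The paper's literal $(\cdot)^{-1/2}$ therefore only makes sense under the reading $\mathbf A(\mathbf A^\top\mathbf A)^{+1/2}$ that you adopt.
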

\textbf{Proof}: See details in \textbf{Appendix~\ref{sec:GO_lowrank_appendix}}. 
\hfill $\square$

Together w/ Prop.\,\ref{prop:low-rank-GO}, our analysis shows that ZO-Muon~\eqref{eq:ZO_GO} can be viewed as a ZO approximation of low-rank Muon~\eqref{eq:FO_GO_proj}. In the FO setting, it has been shown to outperform standard Muon in LLM pretraining \citep{he2025low}.
In practice, computing the SVD of the full, inaccessible FO gradient during ZO optimization is infeasible. However, the empirical success of using a random column-orthonormal projection matrix $\mathbf P$ in Subspace RGE \eqref{eq:RGE_subspace_X} (compared to the RGE in MeZO) motivates a practical strategy of lazily sampling $\mathbf P$ to efficiently approximate the ideal low-rank Muon update.
Our empirical results confirm that such random subspaces capture a sufficiently informative fraction of the gradient energy, as elaborated below.

\noindent \textbf{Projection matrix  sampling strategy: Why random is sufficient.}
While one might think that an informed projection matrix $\mathbf P$ in \eqref{eq:ZO_GO}, such as one derived from the moving average of past ZO gradients, would better align with the true gradient's principal components, our empirical results suggest otherwise. As detailed in \textbf{Table \ref{tab:projection-matrix-sampling} of Appendix \ref{appendix:additional-results}}, Gaussian sketching~\citep{halko2011finding} yields no notable performance gains over random sampling with either Subspace-MeZO or ZO-Muon. Furthermore, maintaining the historical momentum matrix increases GPU memory consumption, which counteracts the primary memory-efficiency motivation of ZO fine-tuning. We believe that, in highly overparameterized models, the gradient energy is distributed in a way that a lazily-sampled random low-dimensional projection efficiently captures enough informative variance to guide optimization. Therefore, the random column-orthonormal projection remains a good  practical balance between accuracy and memory overhead.

\noindent \textbf{Design choices in ZO-Muon: Query number $N_q$ and projection rank $r$.}
Within the ZO-Muon framework, two hyperparameters play a central role: the number of queries $N_q$ in~\eqref{eq:ZO_GO} and the projection rank $r$ of $\mathbf{P}$.


First, we find that setting the query number $N_q > 1$ is crucial for ZO-Muon, \textit{e.g.}, $N_q = 4$ works well in our experiments, indicating that the RGE \eqref{eq:ZO_GO} should be implemented in its multi-query form.
Recall from~\eqref{eq:ZO_GO} that in the absence of $\mathrm{msign}$, the method reduces to Subspace RGE, yielding Subspace-MeZO as examined in Fig.\,\ref{fig:RGE_subspace}. For both Subspace-MeZO and its full-space baseline MeZO~\citep{malladi2023finetuning}, the default choice is $N_q = 1$.  This is due to  the \emph{multi-query paradox}~\citep{lin2025multi}: Although using multiple queries per iteration can reduce the variance of the gradient estimate, whether it improves optimization performance depends on how the queries are aggregated. In particular, simple averaging with $N_q > 1$ can lead to diminishing returns (\textit{i.e.}, the multi-query paradox) compared to using a single query per iteration. 

However, unlike Subspace-MeZO, when $N_q = 1$ is used in ZO-Muon, the GO step  \eqref{eq:ZO_GO} reduces to
$\mathrm{msign}(s\boldsymbol{\Psi}) = \mathrm{sign}(s) \mathrm{msign}(\boldsymbol{\Psi})$,
where $s = \frac{f(\mathbf{X} + \mu \mathbf{P}\boldsymbol{\Psi}) - f(\mathbf{X})}{\mu}$ is the finite-difference scalar and $s\boldsymbol{\Psi}$ is the RGE. Here, $\mathrm{sign}(s)$ returns $1$ or $-1$ depending on the sign of $s$.
In this case, even when the RGE magnitude $s$ becomes small (towards convergence), the $\mathrm{msign}$ operation  discards its scale and may amplify small noise  by enforcing a spectrally isotropic update. 
Therefore, ZO-Muon benefits from using $N_q > 1$. 
Meanwhile, it effectively overcame the multi-query paradox compared to Subspace-MeZO (see \textbf{Appendix\,\ref{appendix:N_q-validation}} for empirical validation).

\begin{figure}[htb]
    \centering
    
    \begin{tabular}{c @{\hspace{2mm}} c}
        \includegraphics[width=0.3\columnwidth]{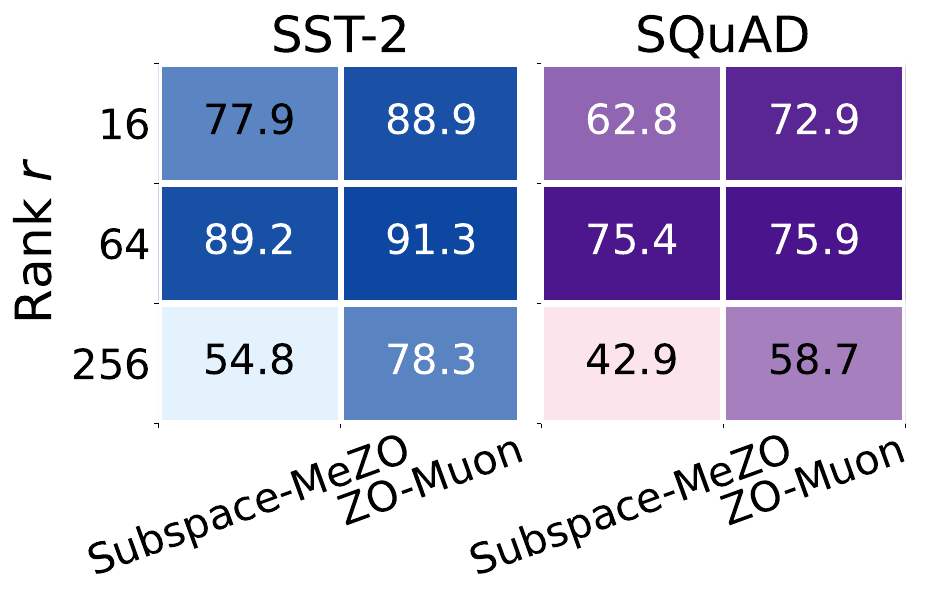} & 
        \includegraphics[width=0.3\columnwidth]{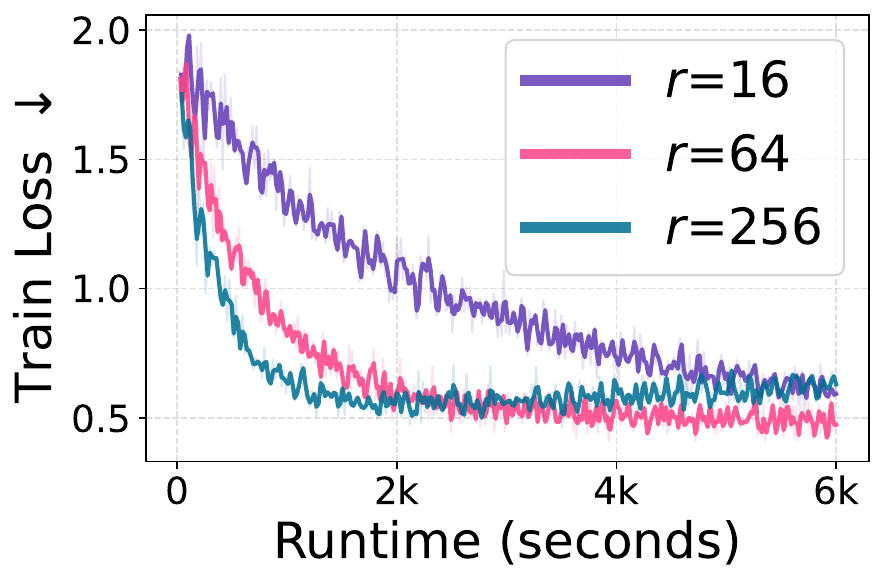} \\
    \end{tabular}
    \vspace*{-1mm}
    \caption{Comparison of ZO-Muon and Subspace-MeZO for fine-tuning OPT-1.3B on SST-2 and SQuAD with $r \in \{16, 64, 256\}$. \textit{(Left)} Fine-tuning accuracy (shown by cell color and numeric value) for different $r$. ZO-Muon consistently outperforms Subspace-MeZO, with $r = 64$ achieving the best performance. \textit{(Right)} Training loss curves of ZO-Muon under different projection ranks for fine-tuning OPT-1.3B on SQuAD.
    }
    \label{fig:rank}
\end{figure}

For the other key hyperparameter, the projection rank $r$ that defines the subspace via $\mathbf{P}$, we find that ZO-Muon remains effective as long as it is set to a reasonably reduced dimension (\textit{e.g.}, $r \in \{ 64, 128\}$). This is roughly aligned with the effective rank of FO gradients shown in Fig.\,\ref{fig:RGE_subspace}(a). To validate this, \textbf{Fig.\,\ref{fig:rank}} compares the performance of ZO-Muon with Subspace-MeZO under different values of $r$ when fine-tuning OPT-1.3B on SST-2 and SQuAD. As shown in \textbf{Fig.\,\ref{fig:rank}(a)}, ZO-Muon consistently outperforms Subspace-MeZO across all rank choices (indicating the usefulness of GO in ZO-Muon) and achieves the best performance with a moderate rank such as $r=64$. Setting $r$ too small or too large degrades the performance of ZO-Muon: A too-small $r$ discards informative gradient components during projection, while an overly large $r$ amplifies noisy directions after GO. This is further supported by \textbf{Fig.\,\ref{fig:rank}(b)}, which shows that ZO-Muon converges slowly when $r$ is too small, whereas using an overly large rank (\textit{e.g.}, $r=256$) leads to instability and loss increase in the later stage of training, likely due to excessive gradient noise in ZO optimization.

\vspace*{-1mm}
\section{Experiments}
\vspace*{-1mm}
\label{sec: experiment}
\subsection{Experiment setups}
\vspace*{-1mm}
\label{sec: experiment-setup}
\noindent \textbf{Models and datasets.} 
We evaluate the performance of ZO optimization on fine-tuning tasks for both LLMs and ViTs. For LLMs, we primarily consider 
LlaMA3-8B \citep{grattafiori2024LlaMA} and OPT-1.3B/13B \citep{zhang2022opt} on the SuperGLUE benchmark \citep{wang2019superglue}. For ViTs, we fine-tune ViT-B/16 and ViT-L/16 \citep{dosovitskiy2021an} on the CIFAR-10/100 datasets \citep{krizhevsky2009learning}.

\noindent \textbf{Baselines and evaluations.}
We compare the proposed ZO-Muon with the following stateful ZO optimization baselines, which were primarily developed for large-model fine-tuning. These include MeZO \citep{malladi2023finetuning}, SparseMeZO (S-MeZO) \citep{liu2025sparse}, HiZOO \citep{zhao2025secondorder}, LOZO \citep{chen2025enhancing}, SubZero \citep{yu2025zeroth}, and Subspace-MeZO developed in Sec.\,\ref{sec:formulation} prior to integration with Muon.
We do not include other recent ZO optimization methods such as FZOO \citep{dang2025fzoo} and MUZO \citep{peng2025muzo} in our comparison, as we were unable to find open-source and reproducible implementations for these approaches.
In addition to ZO approaches, we also include FO baselines, Adam \citep{kingma2014adam} and LoRA (with Adam) \citep{hu2022lora}.

\begin{table}[thb]
\centering
\small
\vspace*{-1mm}
\caption{ZO-Muon outperforms the ZO baselines by a non-negligible margin, demonstrated by fine-tuning LLMs (LlaMA3-8B and OPT-13B) on SuperGLUE benchmark and fine-tuning ViTs (ViT-B and ViT-L) on CIFAR datasets. 
FO methods (marked by $\orangedot$) include Adam and LoRA. 
The best ZO fine-tuning result for each task is highlighted in \textbf{bold}. 
}
\label{tab:main-table}
\resizebox{0.75\columnwidth}{!}{%
    \begin{threeparttable}
        \begin{tabular}{lcccccc|cc}
        \toprule
        \toprule
        \multirow{2}{*}{\textbf{Method}} & \multicolumn{6}{c}{\textbf{LLM Fine-tuning}} & \multicolumn{2}{c}{\textbf{ViT Fine-tuning}} \\
        \cmidrule(lr){2-7} \cmidrule(lr){8-9}
         & \textbf{SST-2} & \textbf{RTE} & \textbf{CB} & \textbf{BoolQ} & \textbf{WiC} & \textbf{SQuAD} & \textbf{CIFAR-10} & \textbf{CIFAR-100} \\
        \midrule
        
         & \multicolumn{6}{c}{\cellcolor{lightblue}\textbf{LlaMA3-8B}} & \multicolumn{2}{c}{\cellcolor{lightgreen}\textbf{ViT-B}} \\
        \midrule
        $\orangedot$ Adam    & 96.0 & 92.0 & 92.0 & 86.6 & 72.6 & 90.4 & 98.9 & 91.2 \\
        $\orangedot$ LoRA    & 95.0 & 80.9 & 73.2 & 86.4 & 70.7 & 89.4 & 98.3 & 86.4 \\
        MeZO    & 92.7 & 74.4 & \textbf{69.6} & 76.7 & 57.8 & 86.7 & 93.5 & 64.5 \\
        S-MeZO  & 92.1 & 69.7 & 69.6 & 80.5 & 56.9 & 87.5 & 93.7 & 64.3 \\
        HiZOO   & 93.5 & 75.1 & 69.6 & 80.0 & 59.7 & 87.3 & 93.2 & 64.9 \\
        LOZO    & 92.5 & 66.8 & \textbf{69.6} & 79.4 & 55.8 & \textbf{89.0} & 93.2 & 61.8 \\
        SubZero & 92.1 & 71.4 & 67.9 & 82.0 & 58.8 & 88.3 & 41.1\tnote{*} & 7.3\tnote{*} \\
        Subspace-MeZO & 92.3 & 68.6 & \textbf{69.6} & 80.0 & 62.9 & 84.5 & 94.2 & 58.2 \\
        \textbf{ZO-Muon} & \textbf{94.3} & \textbf{81.2} & \textbf{69.6} & \textbf{82.9} & \textbf{65.2} & 88.2 & \textbf{94.9} & \textbf{72.6} \\
        \midrule
        
         & \multicolumn{6}{c}{\cellcolor{lightblue}\textbf{OPT-13B}} & \multicolumn{2}{c}{\cellcolor{lightgreen}\textbf{ViT-L}} \\
        \midrule
        $\orangedot$ Adam  & 95.3 & 80.9 & 94.6 & 83.5 & 66.3 & 89.5 & 98.8 & 91.7 \\
        $\orangedot$ LoRA    & 94.8 & 78.3 & 69.6 & 80.2 & 64.3 & 88.0 & 98.8 & 90.6 \\
        MeZO    & 91.4 & 66.1 & 66.0 & 66.1 & 59.4 & 81.8 & 95.6 & 47.3 \\
        S-MeZO  & 90.4 & 63.5 & 69.6 & 66.4 & 58.8 & 80.8 & 96.9 & 56.3 \\
        HiZOO   & 92.1 & 69.3 & 69.6 & 67.6 & 59.4 & 82.1 & 96.3 & 46.6 \\
        LOZO    & 91.7 & 70.4 & 69.6 & 71.9 & 60.2 & \textbf{84.9} & 95.3 & 48.9 \\
        SubZero & 92.1 & 71.8 & \textbf{71.4} & 70.8 & 60.8 & 84.5 & 41.2\tnote{*} & 3.2\tnote{*} \\
        Subspace-MeZO & 91.7 & 70.7 & \textbf{71.4} & 68.1 & \textbf{61.7} & 83.5 & 97.0 & 64.7 \\
        \textbf{ZO-Muon} & \textbf{92.5} & \textbf{72.9} & \textbf{71.4} & \textbf{72.4} & \textbf{61.7} & 84.5 & \textbf{97.2} & \textbf{72.4} \\
        \bottomrule
        \bottomrule
        \end{tabular}%
        
        \begin{tablenotes}
            \item [*] SubZero underperforms despite extensive hyperparameter tuning, likely due to training instability caused by the double-projection RGE.
        \end{tablenotes}
    \end{threeparttable}
}
\vspace*{-1mm}
\end{table}

\noindent \textbf{Implementation details.} 
%
For ZO-Muon, we perform a hyperparameter grid search over the projection rank $r \in \{64, 128\}$ across all settings, following the rank selection rationale in Sec.\,\ref{sec:zo-muon-formulation}. We also search over $N_q \in \{4, 8, 16\}$, consistent with our justification for using $N_q > 1$ in Sec.\,\ref{sec:zo-muon-formulation}. Empirically, we find that setting $N_q = 16$ and $r = 64$ for ViTs, and using $N_q = 4$ with $r \in \{64, 128\}$ for LLMs depending on the task, consistently yields strong performance.
To ensure a fair comparison with other baselines, we scale the number of training steps inversely with $N_q$, so that ZO-Muon does not incur additional query or runtime overhead. The projection matrix $\mathbf{P}$ is resampled via QR decomposition every 100 iterations (\textit{i.e.}, lazily updated). For the $\mathrm{msign}$ operation, we use the NS (Newton--Schulz) iteration unless otherwise specified. 
We set the training batch size to 16 for all LLM experiments, and to 64 and 256 for ViT fine-tuning on CIFAR-10 and CIFAR-100, respectively. The rationale for these choices is supported by the experimental results presented later.
 Additional experiment setups, including settings of baselines, are detailed in Appendix~\ref{appendix:experiment-details}.

\subsection{Experiment results} 
\vspace*{-1mm}
\label{sec:experiment-result}

\noindent \textbf{ZO-Muon achieves superior ZO fine-tuning accuracy for both LLMs and ViTs.}
In \textbf{Tab.\,\ref{tab:main-table}}, we compare the LLM and ViT fine-tuning performance of ZO-Muon with FO and ZO baselines (listed by rows) on LlaMA3-8B and OPT-13B, as well as ViT-B and ViT-L, where each column corresponds to a fine-tuning task.
As shown, ZO-Muon consistently improves fine-tuning accuracy over existing ZO baselines. In particular, for higher-complexity tasks such as RTE and WiC in LLM fine-tuning and CIFAR-100 in ViT fine-tuning, ZO-Muon achieves substantial gains, elevating ZO optimization to a new level, narrowing the gap to FO baselines. Notably, on ViT-L CIFAR-100 fine-tuning, ZO-Muon outperforms the second-best ZO method (Subspace-MeZO) by $7.7\%$ and MeZO by $25.1\%$ in accuracy.
In addition, we observe that Subspace-MeZO consistently ranks among the top-performing ZO baselines, and that ZO-Muon further improves upon it. This indicates that subspace ZO and Muon’s GO (gradient orthogonalization) provide complementary benefits for improving overall ZO optimization.

\noindent \textbf{ZO-Muon delivers a strong efficiency profile over other ZO baselines.}
In \textbf{Tab.\,\ref{tab:computation-cost-table}}, we compare the efficiency of ZO-Muon with ZO and FO baselines in terms of the number of optimization steps, number of queries, runtime, memory cost, and GPU usage for fine-tuning OPT-13B on SST-2, with the corresponding accuracy results reported in Tab.~\ref{tab:main-table}.
For a fair comparison, we fix the total query budget for gradient estimation during fine-tuning to 40k for all ZO methods.
As we can see, ZO-Muon significantly reduces the number of ZO optimization steps and thereby yield the improved runtime efficiency compared to other ZO baselines. For example,  ZO-Muon requires only $70.5\%$ of MeZO's runtime while outperforming it by $1.1\%$ in fine-tuning accuracy.
In addition, compared to FO approaches, ZO optimization does not offer a runtime advantage, but it significantly reduces memory cost and GPU requirements. This is the primary motivation for using ZO fine-tuning as a memory-efficient alternative to FO fine-tuning. 
We observe that ZO-Muon inherits this advantage and does not increase memory cost or GPU usage compared to other ZO baselines.

\begin{table}[htb]
\centering
\small
\caption{Efficiency comparison in terms of the number of optimization steps, number of queries, runtime, memory cost, GPU usage, and accuracy for fine-tuning OPT-13B on SST-2. Note that the total query budget is fixed to 40k for all ZO methods.}
\label{tab:computation-cost-table}
\resizebox{0.75\columnwidth}{!}{%
\begin{tabular}{ccccccc}
\toprule
\toprule
\textbf{Method} & \textbf{Steps} $\downarrow$ & \textbf{Queries} $\downarrow$ & \textbf{Runtime} $\downarrow$ & \textbf{Memory} $\downarrow$ & \textbf{GPU}s $\downarrow$ & \textbf{Accuracy} $\uparrow$ \\
\midrule
$\orangedot$ Adam & 625 & 625 & 10\text{min} & $4 \times 63.7$ \text{GB} & $4\times \text{A100}$ & 95.3 \\
$\orangedot$ LoRA & 625 & 625 & 6\text{min} & $2 \times 52.7 $ \text{GB} & $2\times \text{A100}$ & 94.8 \\
\midrule
   MeZO    & 20k & 40k & 3\text{h} 37\text{min} & 28.2 \text{GB} & $1\times \text{A100}$ & 91.4 \\
S-MeZO    & 20k & 40k & 6\text{h} 46\text{min} & 30.1 \text{GB} & $1\times \text{A100}$ & 90.4 \\
  HiZOO    & 13.3k & 40k & 4\text{h} 46\text{min} & 55.4 \text{GB} & $1\times \text{A100}$ & 92.1 \\
    LOZO    & 20k & 40k & 3\text{h} 19\text{min} & 27.6 \text{GB} & $1\times \text{A100}$ & 91.7 \\
  SubZero    & 20k & 40k & 3\text{h} 23\text{min} & 28.6 \text{GB} & $1\times \text{A100}$ & 92.1 \\
  Subspace-MeZO    & 20k & 40k & 3\text{h} 27\text{min} &  28.6 \text{GB} & $1\times \text{A100}$ & 91.7 \\
  \textbf{ZO-Muon} & 8k & 40k & 2\text{h} 33\text{min} & 29.0 \text{GB} & $1\times \text{A100}$ & 92.5 \\
\bottomrule
\bottomrule
\end{tabular}%
}
\end{table}

\noindent \textbf{SVD vs. Newton--Schulz in ZO-Muon.}
Thanks to the subspace projection in ZO-Muon \eqref{eq:ZO_GO}, the $\mathrm{msign}$ operation, previously the main computational bottleneck in Muon, can now be computed via exact SVD as in \eqref{eq:matrix_sign}, instead of using NS iterations \citep{jordan2024muon}.
Thus, \textbf{Fig.~\ref{fig:runtime-dissect}} compares the runtime and accuracy of ZO-Muon using NS (the default choice) vs. SVD when fine-tuning OPT-1.3B/13B on SST-2. The runtime is further decomposed into three components: sampling the projection matrix $\mathbf{P}$, forward passes for ZO gradient estimation, and the $\mathrm{msign}$ computation.
As shown, using SVD for exact $\mathrm{msign}$ computation indeed improves accuracy over the NS-based implementation. However, SVD incurs approximately $3\times$ higher runtime than NS for the matrix-sign operation. This substantial efficiency gap is the primary reason we adopt NS in ZO-Muon.

\begin{figure}[htb]
    \centering
    \hspace{3mm}\includegraphics[width=0.6\columnwidth] {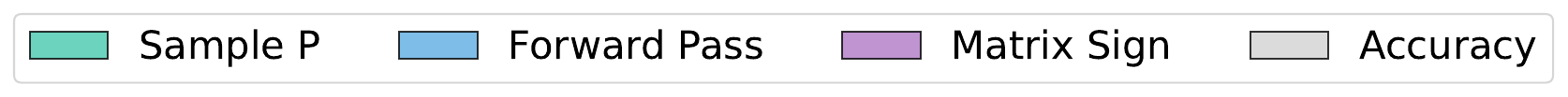}   \\  
    \includegraphics[width=0.6\columnwidth]{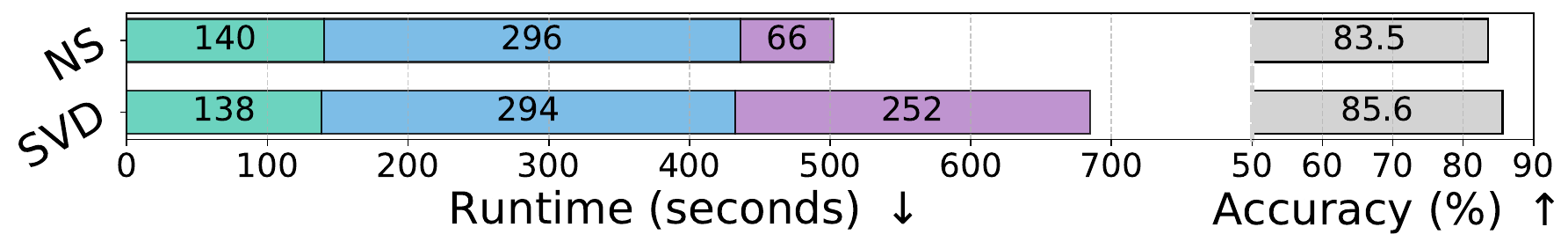} \\
    {\footnotesize{(a) OPT-\textit{1.3B} fine-tuning}} \\
    \includegraphics[width=0.6\columnwidth]{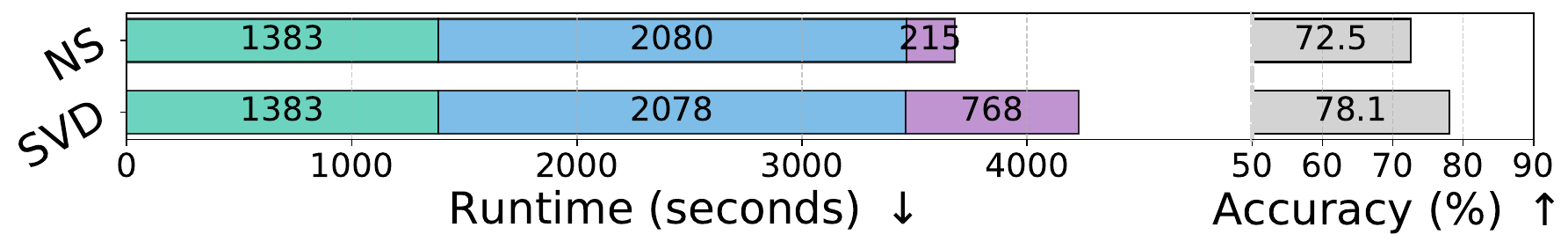} \\
    \vspace*{-1mm}
    {\footnotesize{(b) OPT-\textit{13B} fine-tuning}} \\
    \caption{
   Runtime and accuracy comparison of ZO-Muon using NS and SVD for fine-tuning OPT-1.3B/13B on SST-2 under the same query budget (5,000).
    }
    \label{fig:runtime-dissect}
\end{figure}

\noindent \textbf{ZO-Muon does \textit{not} rely on large batch sizes.}
In ZO optimization, increasing the batch size reduces gradient-estimation variance but also raises memory usage, since GPU cost is dominated by forward passes. We therefore study ZO performance as a function of batch size. 
\textbf{Fig.\,\ref{fig:vit-bs}} 
compares accuracy and GPU memory of ZO-Muon and baselines for ViT-B fine-tuning on CIFAR-100 under batch sizes 64, 128, and 256. While larger batches generally improve ZO accuracy, reducing the batch size from 256 to 64 causes a severe drop for baselines (\textit{e.g.}, $22.2\%$ for MeZO) but only a modest drop for ZO-Muon ($6.2\%$). Notably, ZO-Muon with batch size 64 still outperforms MeZO with batch size 256. Together with Fig.\,\ref{fig:vit-bs}(b), which shows the higher memory cost of larger batches, these results demonstrate that ZO-Muon is substantially more memory-efficient.
The same advantage in resilience to batch size selection is observed for LLM fine-tuning, as shown in \textbf{Fig.\,\ref{fig:llm-bs}}.

\begin{figure}[htb]
    \centering
    \begin{tabular}{cc}
        \includegraphics[width=0.3\columnwidth]{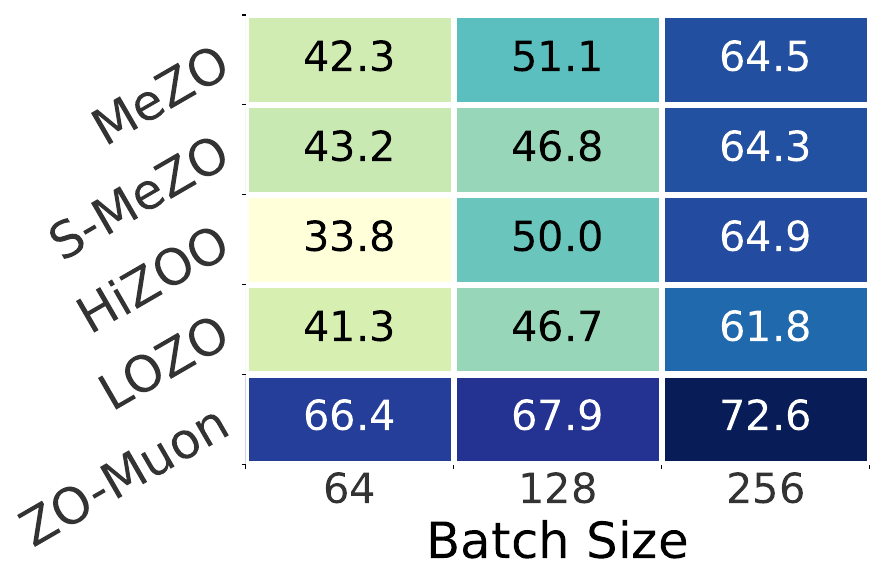} & 
        \includegraphics[width=0.3\columnwidth]{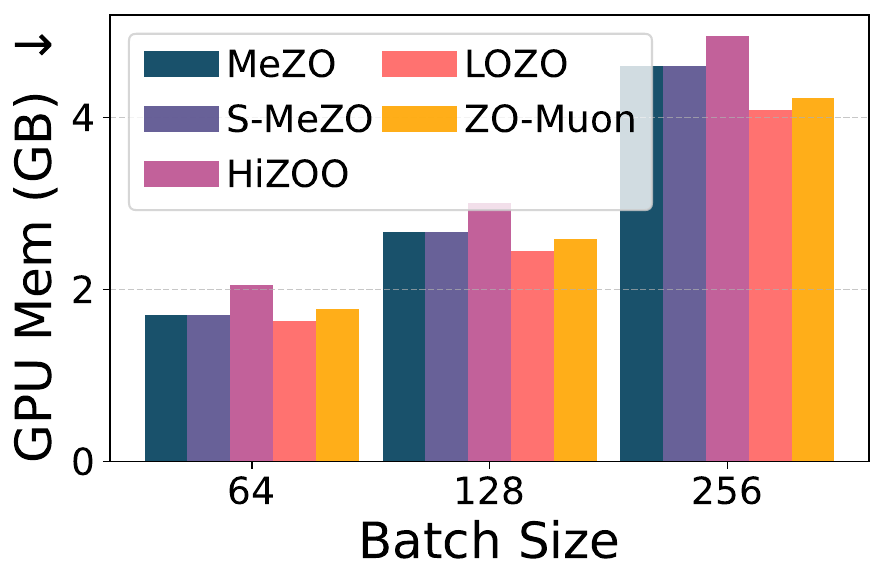}  \vspace*{-2mm}\\
       
        \footnotesize{(a) Fine-tune accuracy} & 
        \footnotesize{(b) GPU memory} \\   
    \end{tabular}  
    \caption{Accuracy and GPU memory comparison under varying batch sizes. ZO-Muon vs. baselines on ViT-B fine-tuning for CIFAR-100 with a fixed query budget.}
    \label{fig:vit-bs}
\end{figure}

\noindent \textbf{Additional results.}
We report additional validations in \textbf{Appendix\,\ref{appendix:additional-results}}. 
\textbf{Fig.\,\ref{fig:loss-curve-appendix}} shows the fast convergence of ZO-Muon across multiple datasets, complementing Fig.\,\ref{fig:motivating-figure}. \textbf{Tab.\,\ref{tab:gemma-table}} reports results on Gemma2-2B \citep{team2024gemma}, where ZO-Muon again consistently outperforms other ZO methods, in line with Tab.\,\ref{tab:main-table}. We also study the resampling frequency of the projection matrix $\mathbf P$ and show that lazy resampling stabilizes ZO training in \textbf{Fig.\,\ref{fig:sample-interval}}, and we find random sampling $\mathbf P$ both effective and efficient, shown in \textbf{Tab.~\ref{tab:projection-matrix-sampling}}.

\section{Conclusion}
\label{sec: conclusion}

In this paper, we propose ZO-Muon, a novel ZO optimization method that bridges the gap between memory efficiency and convergence speed in deep model training. By combining subspace optimization with spectral gradient orthogonalization, ZO-Muon resolves the fundamental variance–efficiency tension in standard ZO methods. Extensive experiments on LLMs and ViTs show that ZO-Muon consistently outperforms state-of-the-art baselines in both performance and efficiency.
Despite these gains, our work has limitations that point to future directions: we focus on parameter-efficient fine-tuning, and extending ZO-Muon to full pre-training of large-scale models remains open.
\section*{Impact Statement}
This work aims to advance zeroth-order (ZO) optimization for deep learning, with the goal of improving the accuracy and efficiency of fine-tuning large-scale neural networks without using backpropagation. By substantially reducing memory overhead and improving query efficiency, the proposed method (ZO-Muon) lowers the computational barrier for adapting large models, potentially enabling a broader community of researchers and practitioners with limited resources to work with foundation models.
More broadly, reducing reliance on first-order gradients in model training and adaptation can contribute to lower energy consumption and more sustainable use of computational resources. At the same time, increased accessibility to powerful models may accelerate progress across a wide range of scientific and engineering applications.
All experiments in this paper are conducted using publicly available models and datasets. The method itself is a general model training technique and does not introduce new capabilities that are inherently harmful. 
We do not foresee any immediate negative societal impacts directly resulting from this work.
\section*{Acknowledgment}
\label{sec: acknowledgment}
The work of Y. Lang, C. Wang, Y. Zhang, and S. Liu was supported in part by the
National Science Foundation (NSF) CAREER Award IIS-2338068 and
the Cisco Research Award. Z. Zhang was supported by NSF CCF-2419889.


\bibliography{refs/MU_SLiu, refs/MU,refs/ZO_SLiu,refs/muon_opt,refs/yicheng}

\bibliographystyle{IEEEtranN}

\appendix
\clearpage
\onecolumn
\appendix
\setcounter{section}{0}
\section*{{Appendix}}

\setcounter{section}{0}
\setcounter{figure}{0}
\makeatletter 
\renewcommand{\thefigure}{A\arabic{figure}}
\renewcommand{\theHfigure}{A\arabic{figure}}
\renewcommand{\thetable}{A\arabic{table}}
\renewcommand{\theHtable}{A\arabic{table}}

\makeatother
\setcounter{table}{0}
\setcounter{equation}{0}
\renewcommand{\theequation}{A\arabic{equation}}

\section{Intrinsic Rank Measurement}
\label{appendix:intrinsic-rank}
To estimate the intrinsic dimensionality of the gradients, we perform Singular Value Decomposition (SVD) on the gradient matrix $\mathbf{G}$. Let $\{\sigma_i\}_{i=1}^n$ denote the singular values of $\mathbf{G}$ sorted in descending order. We determine the effective rank by identifying the number of principal components $k$ required to capture $99.99\%$ of the gradient energy. This relationship is formulated as finding the cutoff $k$ satisfying:
$\mathcal{E}(k) = \frac{\sum_{i=1}^{k} \sigma_i^2}{\sum_{i=1}^{n} \sigma_i^2} \ge 0.9999$.

\section{Ineffectiveness of Full Space Gradient Orthogonalization}
\label{appendix:ineffectiveness}

\begin{figure*}[htb]
    \centering
    \includegraphics[width=0.5\linewidth]{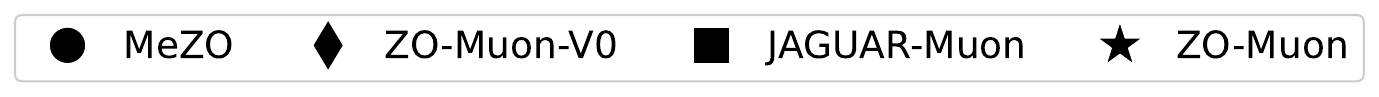}\\
    \includegraphics[width=0.3\linewidth]{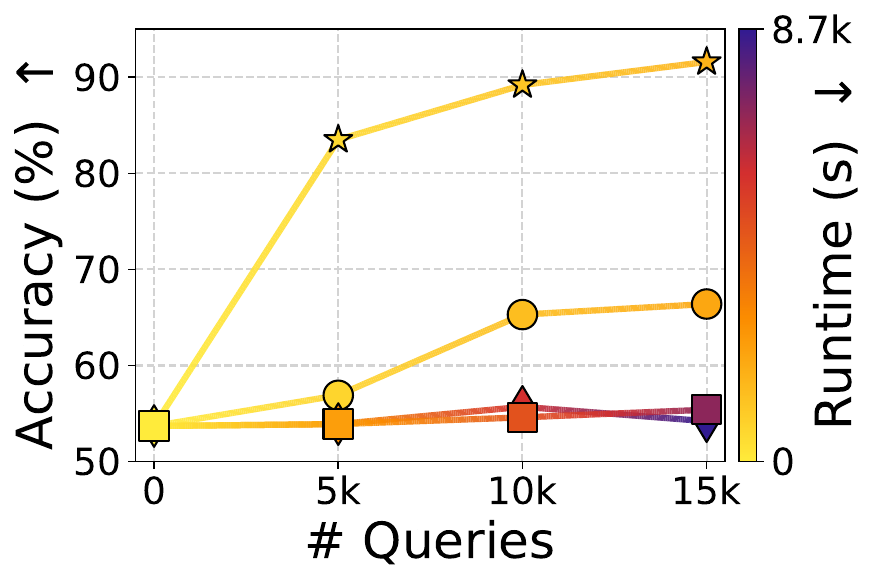}
    \caption{Fine-tune accuracy and runtime of full space gradient orthogonalization methods JAGUAR-Muon and ZO-Muon-V0 compared to MeZO and ZO-Muon, demonstrated by fine-tuning OPT-1.3B on SST-2.
    }
    \label{fig:full-space-appendix}
\end{figure*}

Complementary to \textbf{Fig.~\ref{fig:ZO_Muon_V0}}, we additionally validate the \textit{ineffectiveness} of full space ZO gradient orthogonalization by fine-tuning OPT-1.3B on SST-2, shown in \textbf{Fig.~\ref{fig:full-space-appendix}}. With the same query number, ZO-Muon-V0 and JAGUAR-Muon \citep{petrov2025leveraging} does not outperform MeZO in fine-tune accuracy. Additionally, they consume more runtime compared to MeZO. In contrast, the \textit{subspace} gradient orthogonalization ZO-Muon outperforms the above methods in accuracy and does not sacrifice runtime efficiency.

\section{ZO-Muon Algorithm} 
\label{appendix:zo-muon-formulation}
\textbf{Algorithm~\ref{algo:zo-muon}} details our proposed ZO-Muon method. ZO-Muon treats parameters as matrices and optimizes layer-wise matrices $\{\mathbf{X}_l\}$ where $l$ represents each layer. First, the projection matrices $\{\mathbf{P}_l\}$ are lazily sampled every interval $v$, corresponding to ``\textit{subspace sampling}''. Second, we perform \textit{Subspace-RGE} following \eqref{eq:RGE_subspace}, making $N_q>1$ queries and performing a forward-difference estimate, and average the subspace ZO estimates. Finally, shown in ``full-space weight update'', we follow \eqref{eq:ZO_Muon} to orthogonalize the subspace gradient and project them back to full-space for model update.

\begin{algorithm}[htb]
   \caption{ZO-Muon}
   \label{algo:zo-muon}
   \small
   \begin{algorithmic}
      \STATE {\bfseries Input:} Parameters $\mathbf{X}=\{\mathbf{X}_l\}$, (where $\mathbf{X}_l \in \mathbb{R}^{m_l \times n_l}$), loss function $f(\mathbf{X})$, projection matrices $\mathbf{P}=\{\mathbf{P}_l\}$, learning rate $\eta$, perturbation scale $\mu$, rank parameter $r$, query number $N_q$, projection matrix resample interval $v$, train steps $T$.
      
      \STATE \textit{// Initialize Projection Matrices}
      \STATE Sample column-orthogonal projection matrices $\{\mathbf{P}_l\}$ via QR decomposition.
      \FOR{$t = 0, \dots, T-1$}
         \STATE \textit{// 1. Subspace Sampling (Periodic Update)}
         \IF{$t > 0$ \AND $t \pmod v = 0$}
             \STATE Resample $\{\mathbf{P}_l\}$ via QR decomposition.
         \ENDIF

         \STATE \textit{// 2. Subspace-RGE} \\
         Compute subspace ZO gradient estimate via \eqref{eq:RGE_subspace} with $N_q>1$.
         \STATE \textit{// 3. Full-space weight update} \\
         Perform subspace gradient orthogonalization, then project back to full space via \eqref{eq:ZO_GO}. \\
         Update model parameters via \eqref{eq:ZO_Muon}.
         
      \ENDFOR
      \STATE {\bfseries Return} $\mathbf{X}$ ;

   \end{algorithmic}
\end{algorithm}

\section{Proof of Proposition \ref{prop:low-rank-GO}}
\label{sec:GO_lowrank_appendix}
\textbf{Proof}:
With $\mathrm{msign}(\mathbf G) 
   = \mathbf G (\mathbf G^{\top} \mathbf G)^{-\frac{1}{2}} $ and using $\mathbf{P}^\top \mathbf{P} = \mathbf{I}$, we obtain
\begin{align}
    \mathbf{P}\, \mathrm{msign}(\mathbf{P}^\top \mathbf{G}) 
    & = \mathbf{P}\, \mathbf{P}^\top \mathbf{G} \big( (\mathbf{P}^\top \mathbf{G})^\top (\mathbf{P}^\top \mathbf{G}) \big)^{-1/2} \nonumber \\
    & = \mathbf{P}\, \mathbf{P}^\top \mathbf{G} \big( \mathbf{G}^\top \mathbf{P}\mathbf{P}^\top \mathbf{G} \big)^{-1/2}.
    \label{eq:GO_lowrank_proof1}
\end{align}
Next, the SVD of $\mathbf{G}$ yields
\[
\mathbf{G} = \mathbf{U}_{[:, :k]} \, \mathrm{diag}(\boldsymbol{\sigma}) \, \mathbf{V}_{[:, :k]}^\top,
\]
where $\mathrm{diag}(\boldsymbol{\sigma}) \in \mathbb{R}^{k \times k}$ is the diagonal matrix containing the nonzero singular values of $\mathbf{G}$, and $k = \mathrm{rank}(\mathbf{G})$.

Since $\mathbf{P} = \mathbf{U}_{[:, :k]}$ and $\mathbf{P}^\top \mathbf{P} = \mathbf{I}$, we have
\begin{align}
    \mathbf{P}\mathbf{P}^\top \mathbf{G}
    & = \mathbf{U}_{[:, :k]} \mathbf{U}_{[:, :k]}^\top \mathbf{U}_{[:, :k]} \, \mathrm{diag}(\boldsymbol{\sigma}) \, \mathbf{V}_{[:, :k]}^\top \nonumber \\
    & = \mathbf{U}_{[:, :k]} \, \mathrm{diag}(\boldsymbol{\sigma}) \, \mathbf{V}_{[:, :k]}^\top \nonumber \\
    & = \mathbf{G}.
    \label{eq:GO_lowrank_proof2}
\end{align}
That is, $\mathbf{P}\mathbf{P}^\top$ acts as the identity on the column space of $\mathbf{G}$. Substituting~\eqref{eq:GO_lowrank_proof2} into~\eqref{eq:GO_lowrank_proof1}, we finally obtain
$\mathbf P  \mathrm{msign} (\mathbf{P}^\top  \mathbf G) =  \mathrm{msign}(\mathbf G)$. This completes the proof.

\section{Hyperparameter Study on $N_q$ in ZO-Muon}
\label{appendix:N_q-validation}
In \textbf{Fig.\,\ref{fig:Nq_validation}}, we compare ZO-Muon with Subspace-MeZO under different query number $N_q$, plotting performance against the total number of queries used during training. Shown in Fig.\,\ref{fig:Nq_validation}(a), there exists a multi-query paradox: Subspace-MeZO with $N_q>1$ performs worse than $N_q=1$. On the contrary, ZO-Muon does not suffer from this issue (\textit{i.e.}, additional queries do not lead to diminishing optimization returns). Moreover, as shown in Fig.\,\ref{fig:Nq_validation}(b), although ZO-Muon uses $N_q = 4$, this does not hurt query efficiency in practice compared to Subspace MeZO with $N_q = 1$, since ZO-Muon converges in significantly fewer iterations.

\begin{figure}[htb]
    \centering
    \includegraphics[width=0.6\columnwidth]{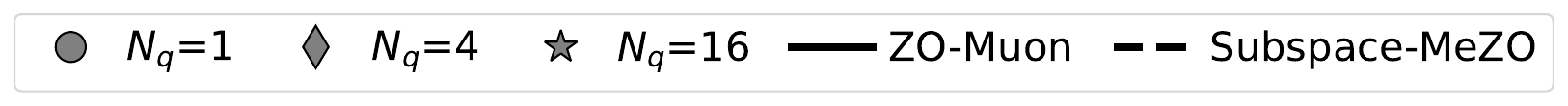}
    \vspace*{-2mm}
    \begin{tabular}{c   c}
        \includegraphics[width=0.3\columnwidth]{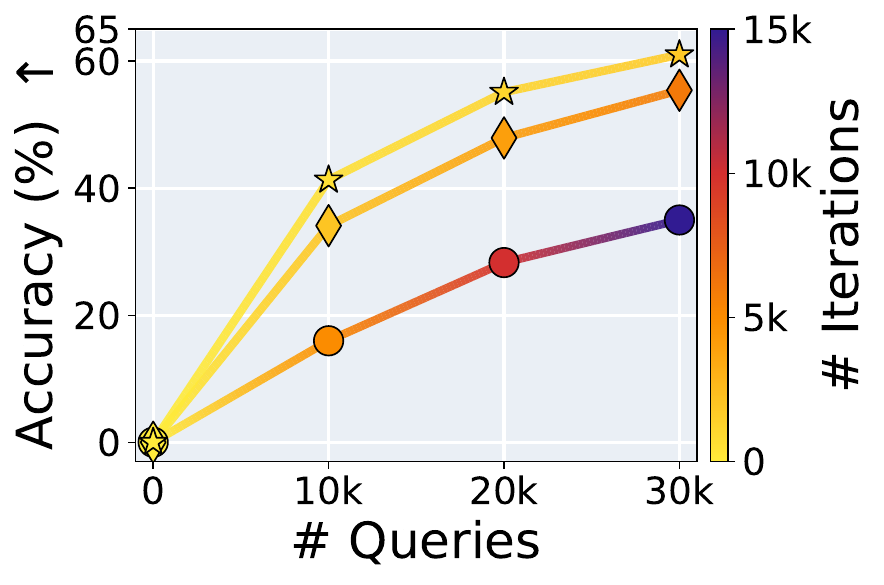} &
        \includegraphics[width=0.3\columnwidth]{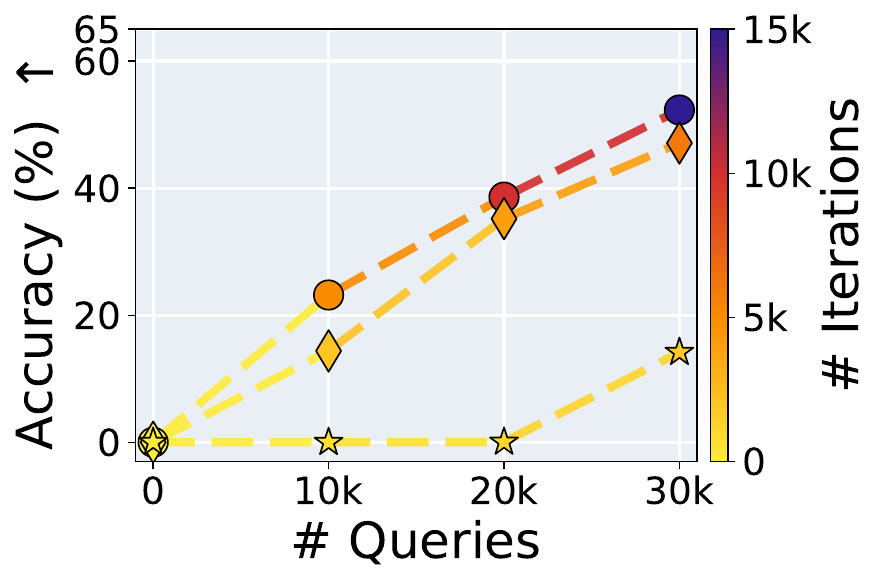} 
        \vspace*{-1mm}
        \\
       \footnotesize{(a) ZO-Muon} &   
        \footnotesize{(b) Subspace-MeZO}\\
    \end{tabular}
    \vspace*{2mm}
    \caption{Fine-tuning accuracy vs. query cost for (a) ZO-Muon and (b) Subspace-MeZO when fine-tuning ViT-B on CIFAR-100, with marker color indicating the used iteration number. The per-iteration query budget is set to $N_q \in \{1, 4, 16\}$.
\textit{(a)} ZO-Muon benefits from increasing $N_q$, with a notable performance gain when moving from $N_q = 1$ to $N_q = 4$.
\textit{(b)} Under the same setting, Subspace-MeZO exhibits a multi-query paradox, where performance degrades when $N_q > 1$.
    }
    \vspace*{-2mm}
    \label{fig:Nq_validation}
\end{figure}

\section{Detailed Experiment Setups}
\label{appendix:experiment-details}

\paragraph{Models and datasets.} 
For LLM experiments, we evaluate Gemma2-2B \citep{team2024gemma}, LlaMA3-8B \citep{grattafiori2024LlaMA} and OPT-13B \citep{zhang2022opt} on 
the SuperGLUE benchmark \citep{wang2019superglue}, including the tasks SST-2 \citep{socher2013recursive}, RTE \citep{dagan2005pascal}, CB \citep{de2019commitmentbank}, BoolQ \citep{clark2019boolq}, WiC \citep{pilehvar2019wic} and SQuAD \citep{rajpurkar2016squad}. Following previous literature \citep{malladi2023finetuning,chen2025enhancing,zhao2025secondorder,yu2025zeroth}, we randomly sample 1,000 train samples and 1,000 test samples. We adopt the same prompts as MeZO \citep{malladi2023finetuning}.

For ViT experiments, we evaluate ViT-B/16 and ViT-L/16 \citep{dosovitskiy2021an} fine-tuned on the CIFAR datasets \citep{krizhevsky2009learning}. Data processing and evaluations are based on the well-established ViT pytorch code base\footnote{\url{https://github.com/jeonsworld/ViT-pytorch}}.

\paragraph{Implementation details.} Our ZO-Muon, as well as baselines LOZO and SubZero treat model parameters as \textit{matrices} instead of vectors. 
Following Muon \citep{jordan2024muon}, we flatten the last three dimensions of 4D convolutional parameters and treat them as 2D matrices for optimization. Vector parameters, as well as the input and output layers, are optimized with MeZO.

\paragraph{Hyperparameters.}
We adopt hyperparameter settings consistent with existing literature to ensure rigorous benchmarking. 
For LLMs, we use a batch size of 16 and train for 20,000 steps, utilizing a constant learning rate without weight decay \citep{malladi2023finetuning, chen2025enhancing, yu2025zeroth}. The perturbation step size is fixed at $\mu=10^{-3}$ based on its demonstrated robustness across tasks \citep{malladi2023finetuning, chen2025enhancing}. 
For ViT experiments, we set the batch size to 64 (CIFAR-10) and 256 (CIFAR-100). The necessity for increased batch sizes on CIFAR-100 is demonstrated in Fig.~\ref{fig:vit-bs} and Sec.~\ref{sec: experiment}. We also fix the perturbation stepsize $\mu=10^{-3}$.
For algorithmic hyperparameters (e.g., sparsity for S-MeZO; rank $r$ and interval $v$ for subspace-based methods like LOZO and SubZero), we strictly follow the original baseline implementations. For LoRA, we follow previous works \citep{malladi2023finetuning, chen2025enhancing} and set rank $r=8$ and the scaling factor $\alpha=16$.
Detailed settings for both ZO and FO methods are listed in \textbf{Tab.~\ref{tab:hyperparam}}.

\paragraph{Computing resources.} Experiments are primarily conducted on A6000 GPUs with 48GB memory. Runtime and GPU consumption evaluation in Tab.~\ref{tab:computation-cost-table} are evaluated on A100 GPUs with 100GB memory since A6000 does not have enough memory for OPT-13B full Adam fine-tuning under our setting.

\begin{table}[htb]
\centering
\caption{Hyperparameter settings of ZO and FO methods used for fine-tuning LLMs and ViTs.}
\label{tab:hyperparam}
\resizebox{14cm}{!}{%
    \small 
    \begin{tabular}{l l c c}
    \toprule
    \toprule
  \multirow{2}{*}{\textbf{Methods}}   & \multirow{2}{*}{\textbf{Hyperparameters}}  & \multicolumn{2}{c}{\textbf{Values}} \\
    \cmidrule(lr){3-4}
     &  & \textbf{LLM} & \textbf{ViT} \\
    \midrule
    \multirow{2}{*}{MeZO} 
     & Learning rate & \makecell[l]{$\{1\text{e}-7, 1\text{e}-6\}$ or $\{1\text{e}-7, 5\text{e}-7, 1\text{e}-6\}$ for SQuAD} & $\{1\text{e}-5, 1\text{e}-4\}$ \\
     & Train steps & 20k & 20k \\
     \midrule
     \multirow{3}{*}{S-MeZO} 
     & Learning rate & \makecell[l]{$\{1\text{e}-7, 1\text{e}-6\}$ or $\{1\text{e}-7, 5\text{e}-7, 1\text{e}-6\}$ for SQuAD} & $\{1\text{e}-5, 1\text{e}-4\}$ \\
     & Sparsity & $0.8$ & $0.8$ \\
     & Train steps & 20k & 20k \\
     \midrule
     \multirow{2}{*}{HiZOO} 
     & Learning rate & \makecell[l]{$\{1\text{e}-7, 1\text{e}-6\}$ or  $\{1\text{e}-7, 5\text{e}-7, 1\text{e}-6\}$ for SQuAD} & $\{1\text{e}-5, 1\text{e}-4\}$ \\
     & Train steps & 20k & 20k \\
     \midrule
    \multirow{4}{*}{LOZO} 
     & Learning rate & $\{1\text{e}-7, 1\text{e}-6\}$ & $\{1\text{e}-5, 1\text{e}-4\}$ \\
     & Rank ($r$) & $\{2, 4, 8\}$ & $\{2, 4, 8\}$ \\
     & Interval ($v$) & $100$ & $100$ \\
     & Train steps & 20k & 20k \\
    \midrule
    \multirow{4}{*}{SubZero} 
     & Learning rate & $\{1\text{e}-7, 1\text{e}-6\}$ & $\{1\text{e}-5, 1\text{e}-4\}$ \\
     & Rank ($r$) & $\{24,48\}$ & $\{24,48\}$ \\
     & Interval ($v$) & $1000$ & $1000$ \\
     & Train steps & 20k & 20k \\
    \midrule
    \multirow{4}{*}{Subspace-MeZO} 
     & Learning rate & $1\text{e}-5$ & $\{1\text{e}-5, 1\text{e}-4\}$ \\
     & Rank ($r$) & $\{64,128\}$ & $\{64,128\}$ \\
     & Interval ($v$) & $100$ & $100$ \\
     & Train steps & 20k & 20k \\
    \midrule
    \multirow{6}{*}{ZO-Muon} 
     & Learning rate & $1\text{e}-2$ & $1\text{e}-2$ \\
     & Rank ($r$) & $\{64,128\}$ & $\{64,128\}$ \\
     & Interval ($v$) & $100$ & $100$ \\
     & Train steps ($N_q=4$) & 8k & 10k \\
     & Train steps ($N_q=8$) & 4k & \_ \\
     & Train steps ($N_q=16$) & \_ & 2k \\
    \midrule
    \midrule
    \multirow{2}{*}{Adam} 
     & Learning rate & $\{5\text{e}-6, 1\text{e}-5\}$ & $\{1\text{e}-3, 5\text{e}-3\}$ \\
     & Epochs  & 10 & 10 \\
     \midrule
     \multirow{4}{*}{LoRA} 
     & Learning rate & $\{5\text{e}-6, 1\text{e}-5\}$ & $\{1\text{e}-3, 5\text{e}-3\}$ \\
     & Epochs  & 10 & 10 \\
     & Rank ($r$) & 8 & 8 \\
     & Alpha ($\alpha$) & 16 & 16 \\
    \bottomrule
    \bottomrule
    \end{tabular}%
}
\vspace*{-2mm}
\end{table}

\vspace*{-1mm}
\section{Additional Experiment Results}
\label{appendix:additional-results}
\vspace*{-1mm}

\begin{figure*}[htb]
    \centering
    \includegraphics[width=0.35\linewidth]{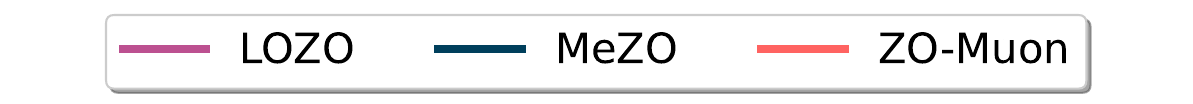}
    \begin{tabular}{c @{\hspace{2mm}} c @{\hspace{2mm}} c @{\hspace{2mm}} c}
        \includegraphics[width=0.24\linewidth]{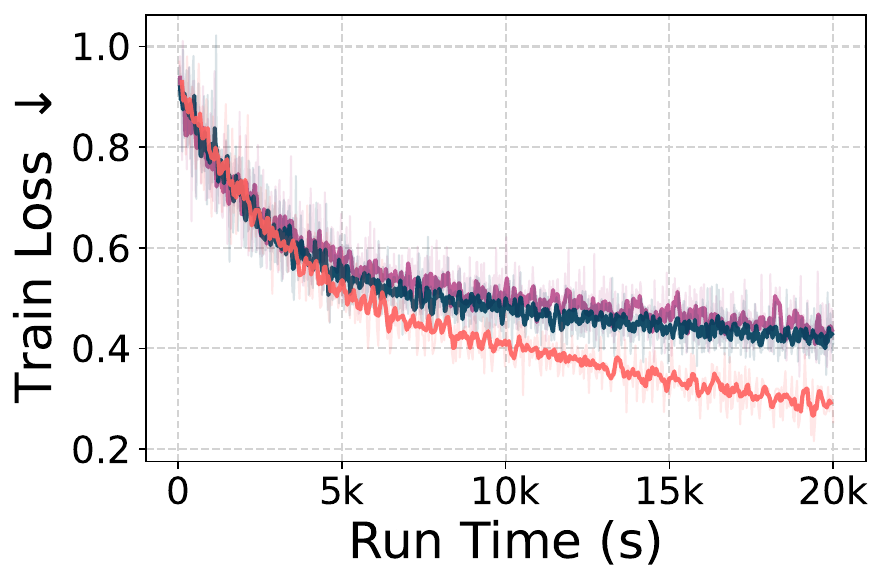} & 
        \includegraphics[width=0.24\linewidth]{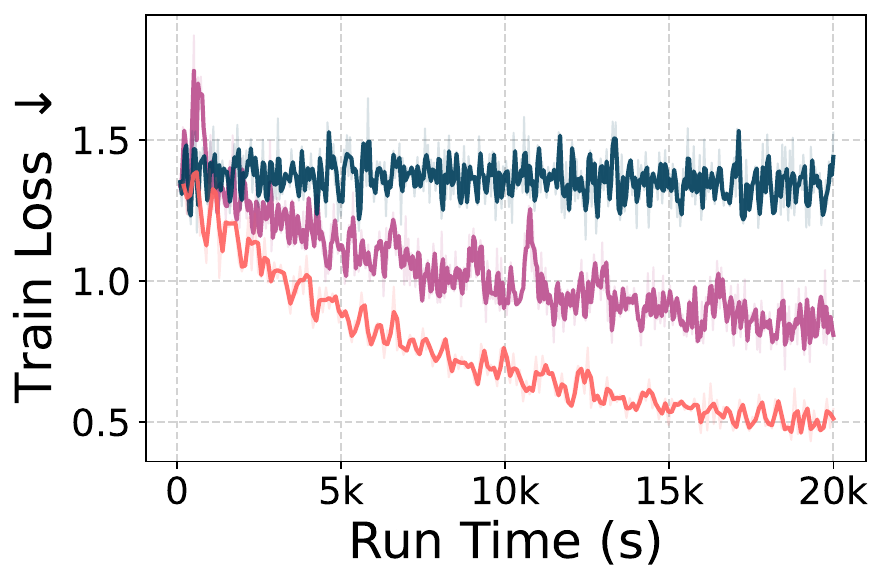} & 
        \includegraphics[width=0.24\linewidth]{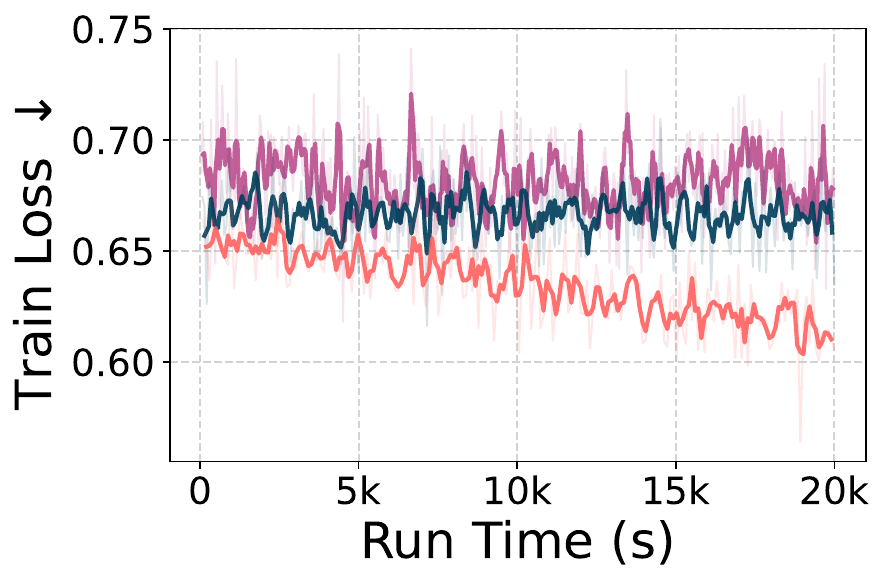} & 
        \includegraphics[width=0.24\linewidth]{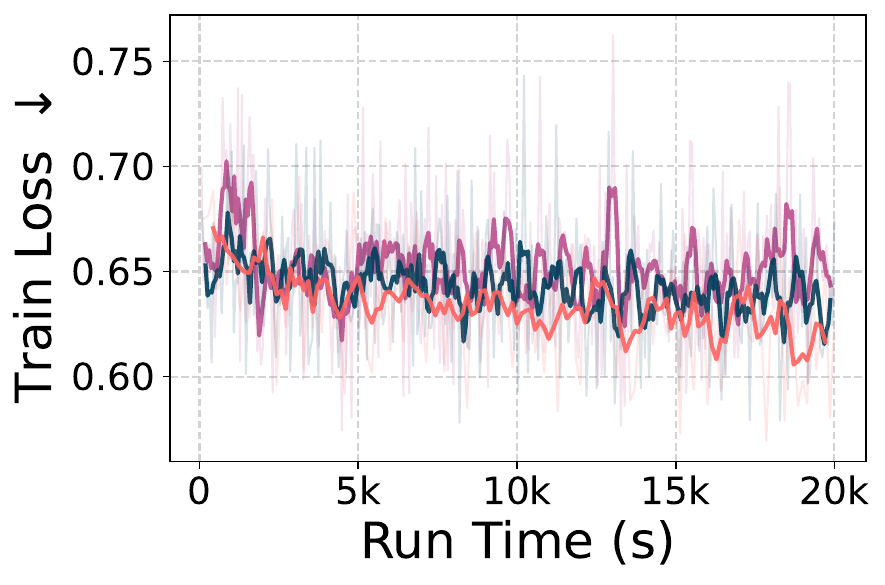} \\
        \footnotesize{(a) SST-2} & 
        \footnotesize{(b) SQuAD} &
        \footnotesize{(c) RTE} &
        \footnotesize{(d) BoolQ} \\
    \end{tabular}
    
    \caption{ZO-Muon achieves faster convergence rate compared to MeZO and LOZO, demonstrated through fine-tuning OPT-13B on SST-2, SQuAD, RTE and BoolQ.
    }
    \label{fig:loss-curve-appendix}
\end{figure*}

\paragraph{Loss curves of LLM fine-tuning.} Complementary to Fig.\,\ref{fig:motivating-figure}, \textbf{Fig.\,\ref{fig:loss-curve-appendix}} shows the training loss curves of ZO-Muon and representative baselines (MeZO and LOZO) for OPT-13B fine-tuning on four downstream tasks.
Note that each method is plotted using its best hyperparameter setting from the search grid (see Appendix~\ref{appendix:experiment-details}). Across all four tasks, ZO-Muon achieves the fastest convergence among the baselines.


 \paragraph{Gemma2-2B results.}
 \textbf{Tab.\,\ref{tab:gemma-table}} presents fine-tuning results on Gemma2-2B. Consistent with Sec.\,\ref{sec:experiment-result}, ZO-Muon achieves the best overall ZO performance and comes closest to FO methods. Notably, it improves over MeZO by $10.2\%$ on RTE and $8.5\%$ on BoolQ.

\begin{table}[htb]
\centering
\small
\caption{Performance of ZO-Muon and baselines on Gemma2-2B under the same settings as Tab.\ref{tab:main-table}.}
\label{tab:gemma-table}
\resizebox{0.45\columnwidth}{!}{%
\begin{tabular}{lcccccc}
\toprule
\toprule
\multirow{2}{*}{\textbf{Method}} & \multicolumn{6}{c}{\textbf{Gemma2-2B Fine-tuning}} \\
\cmidrule(lr){2-7}
 & \textbf{SST-2} & \textbf{RTE} & \textbf{CB} & \textbf{BoolQ} & \textbf{WiC} & \textbf{SQuAD} \\
\midrule
$\orangedot$ Adam    & 94.0 & 84.0 & 84.0 & 90.0 & 72.0 & 89.4 \\
$\orangedot$ LoRA    & 91.6 & 57.8 & 55.4 & 75.2 & 56.1 & 86.9 \\
MeZO    & 91.7 & 53.0 & 60.7 & 66.1 & 57.1 & 74.1 \\
S-MeZO  & 92.7 & 53.8 & 53.6 & 63.9 & 55.8 & 76.1 \\
HiZOO   & 92.1 & 53.1 & 60.7 & 66.6 & 55.2 & 73.4 \\
LOZO    & 92.2 & 56.3 & 57.1 & 65.0 & 50.2 & 77.3 \\
SubZero & 91.7 & 56.7 & \textbf{67.8} & 63.5 & 58.3 & 72.1 \\
\textbf{ZO-Muon} & \textbf{94.2} & \textbf{63.2} & 64.3 & \textbf{74.6} & \textbf{60.3} & \textbf{78.4} \\
\bottomrule
\bottomrule
\end{tabular}%
}
\end{table}

\begin{figure}[htb]
    \centering
    \includegraphics[width=0.4\columnwidth]{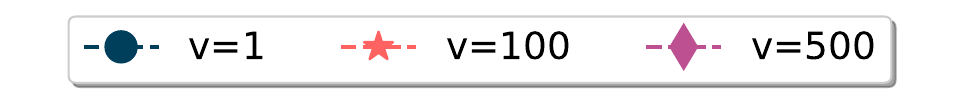} \\
    \begin{tabular}{cc}
        \includegraphics[width=0.3\columnwidth]{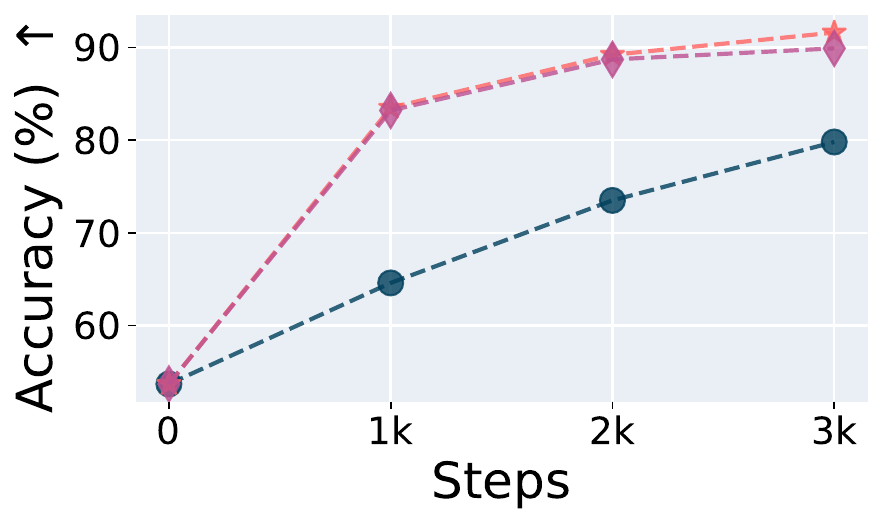} & 
        \includegraphics[width=0.3\columnwidth]{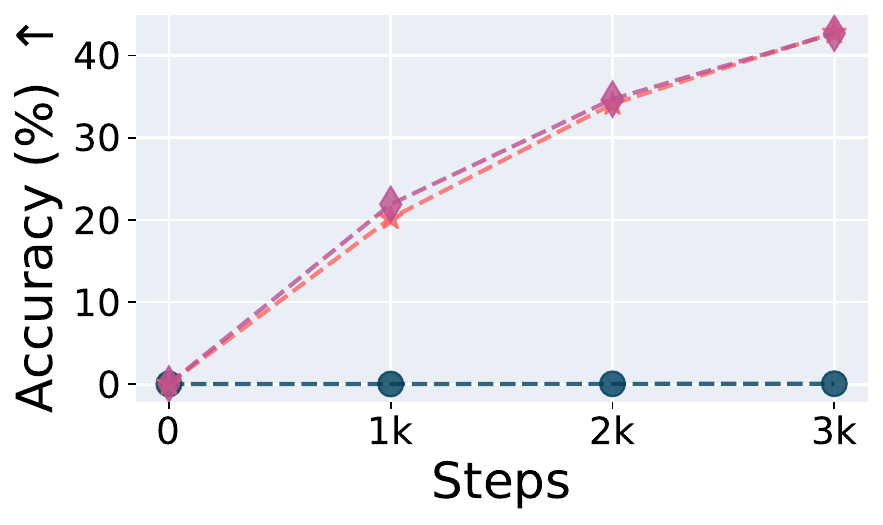} \\
        \footnotesize{(a) OPT-1.3B on SST-2} & 
        \footnotesize{(b) ViT-B on CIFAR-100} \\
    \end{tabular}
    \caption{Performance of ZO-Muon with different projection matrix resampling interval $v$, demonstrated by (a) fine-tuning OPT-1.3B on SST-2 and (b) fine-tuning ViT-B on CIFAR-100. }
    \label{fig:sample-interval}
\end{figure}
\label{appendix:ablation-study}

\paragraph{Projection matrix resampling interval}
Recall that in ZO-Muon we \emph{lazily} resample the subspace projection matrix $\mathbf{P}$ every 100 steps (denoted by the resampling interval $v=100$). To validate this design choice, \textbf{Fig.\,\ref{fig:sample-interval}} compares ZO-Muon with different resampling intervals $v \in \{1, 100, 500\}$. As shown, overly frequent resampling does not improve performance on either LLM or ViT fine-tuning. The case of $v=1$ consistently underperforms $v=100$, while $v=500$ achieves performance comparable to $v=100$. This is because frequent subspace switching (\textit{i.e.}, resampling $\mathbf{P}$) prevents the optimizer from sufficiently exploiting gradient information within a given subspace, leading to less stable training. Our lazy resampling strategy is also aligned with previous works on both FO training \citep{zhao2024galore,refael2025sumo} and ZO training \citep{chen2025enhancing,yu2025zeroth}

\paragraph{Projection matrix sampling strategies.} We present additional studies on sampling strategies for the projection matrices used in Subspace-MeZO and ZO-Muon, with results summarized in \textbf{Tab.~\ref{tab:projection-matrix-sampling}}. Our default strategy, denoted as ``\textit{Random},'' constructs the projection matrix \(\mathbf{P} \in \mathbb{R}^{m \times r}\) via the QR decomposition of a Gaussian matrix. Alternatively, the ``\textit{Sketching}'' strategy applies Gaussian sketching \citep{halko2011finding} to past descending directions to capture informative subspace directions, a technique also utilized in first-order subspace training \citep{he2025low,refael2025adarankgrad}. Specifically, we maintain a momentum term \(\mathbf{M} \in \mathbb{R}^{m \times n}\) representing the moving average of past ZO gradients. The matrix \(\mathbf{P}\) is then derived by performing QR decomposition on \(\mathbf{M}\mathbf{Q}\), where \(\mathbf{Q} \in \mathbb{R}^{n \times r}\) is drawn from a Gaussian distribution. As shown in Tab.~\ref{tab:projection-matrix-sampling}, Gaussian sketching does not yield notable performance gains over random sampling for either Subspace-MeZO or ZO-Muon, yet it increases GPU memory consumption, particularly for LLMs, due to the requirement of storing the momentum \(\mathbf{M}\).
In the ViT fine-tuning scenario, Gaussian sketching exhibits even worse accuracy, likely due to the cumulative bias introduced by Momentum as it accumulates past ZO gradient estimates rather than preciser FO gradients.

\begin{table}[htb]
\centering
\small
\caption{Performance (evaluated by accuracy and F1) and GPU memory (short for ``Mem'') of Subspace-MeZO and ZO-Muon with different projection matrix sampling strategies.}
\label{tab:projection-matrix-sampling}
\resizebox{\columnwidth}{!}{%
    \begin{threeparttable}
        \begin{tabular}{llcccccccccc}
        \toprule
        \toprule
        \multirow{3}{*}{\textbf{Method}} & \multirow{3}{*}{\shortstack[l]{\textbf{Sampling} \\ \textbf{Strategy}}} & \multicolumn{6}{c}{\cellcolor{lightblue}\textbf{LlaMA3-8B Fine-tuning}} & \multicolumn{4}{c}{\cellcolor{lightgreen}\textbf{ViT-B Fine-tuning}} \\
        \cmidrule(lr){3-8} \cmidrule(lr){9-12}
         & & \multicolumn{2}{c}{\textbf{SST-2}} & \multicolumn{2}{c}{\textbf{RTE}} & \multicolumn{2}{c}{\textbf{SQuAD}} & \multicolumn{2}{c}{\textbf{CIFAR-10}} & \multicolumn{2}{c}{\textbf{CIFAR-100}} \\
        \cmidrule(lr){3-4} \cmidrule(lr){5-6} \cmidrule(lr){7-8} \cmidrule(lr){9-10} \cmidrule(lr){11-12}
         & & Acc. (\%) $\uparrow$ & Mem (MiB) $\downarrow$ & Acc. (\%) $\uparrow$ & Mem (MiB) $\downarrow$ & F1 (\%) $\uparrow$ & Mem (MiB) $\downarrow$ & Acc. (\%) $\uparrow$ & Mem (MiB) $\downarrow$ & Acc. (\%) $\uparrow$ & Mem (MiB) $\downarrow$ \\
        \midrule
        \multirow{2}{*}{Subspace-MeZO} & Random & 92.3 & 19058 & 68.6 & 28242 & 84.5 & 32102 & 94.2 & 1634 & 58.2 & 4086 \\ 
         & Sketching & 92.3 & 34814 & 67.5 & 38874 & 87.2 & 44734 & 81.2 & 1898 & 29.1 & 4088 \\ 
        \midrule
        \multirow{2}{*}{ZO-Muon} & Random & 94.3 & 19176 & 81.2 & 27224 & 88.2 & 30718 & 94.9 & 1686 & 72.6 & 4410 \\
         & Sketching & 93.8 & 35268 & 78.3 & 44304 & 88.3 & 45248 & 95.3 & 1950 & 59.8 & 4412 \\  
        \bottomrule
        \bottomrule
        \end{tabular}%
    \end{threeparttable}
}
\vspace*{-1mm}
\end{table}

\begin{figure}[htb]
    \centering
    
    \includegraphics[width=0.5\columnwidth]{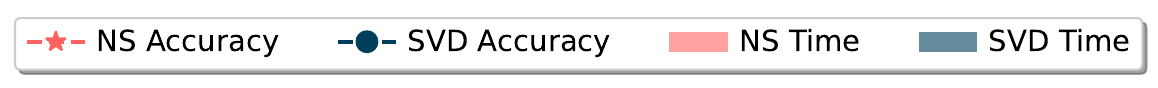}
    
    \begin{tabular}{c  c}
        \includegraphics[width=0.3\columnwidth]{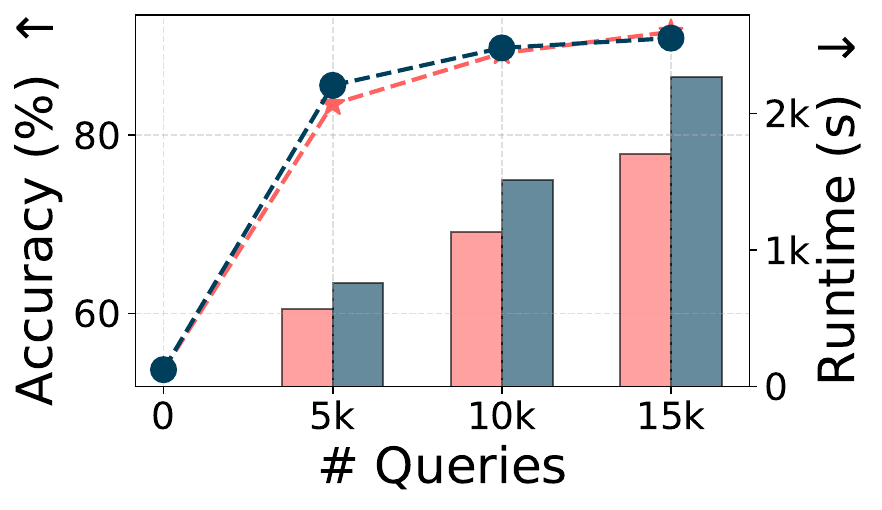} & 
        \includegraphics[width=0.3\columnwidth]{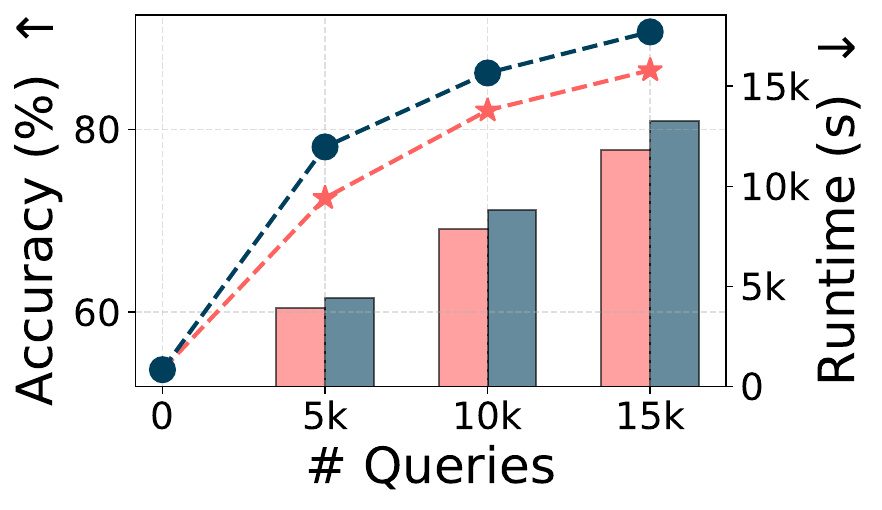} \\
        \footnotesize{(a) OPT-\textit{1.3B} fine-tuning} & 
        \footnotesize{(b) OPT-\textit{13B} fine-tuning} \\
    \end{tabular}
    \caption{Fine-tuning accuracy and runtime of ZO-Muon using SVD vs. NS for gradient orthogonalization under the same query budget. Runtime is measured in seconds (s).}
    \label{fig:svd-ns}
\end{figure}

\paragraph{SVD vs. NS in ZO-Muon.} 
Complementary to Fig.\,\ref{fig:runtime-dissect}, \textbf{Fig.\,\ref{fig:svd-ns}} compares ZO-Muon using NS and SVD under different query budgets on OPT-1.3B and OPT-13B. Consistent with Sec.\,\ref{sec:experiment-result}, SVD outperforms NS but incurs higher runtime. Moreover, the performance gap widens with model size: on OPT-1.3B (Fig.\,\ref{fig:svd-ns}(a)), NS nearly matches SVD with 15k queries, whereas on OPT-13B (Fig.\,\ref{fig:svd-ns}(b)) NS shows a larger gap. This aligns with the fact that the approximation error of NS increases with problem dimension \citep{he2025low}.

\paragraph{ZO-Muon does not rely on large batch sizes.} We extend the evaluation of ZO-Muon with decreased batch sizes in Fig.\,\ref{fig:vit-bs} to LLMs, evaluated by fine-tuning LlaMA3-8B on RTE, shown in \textbf{Fig.\,\ref{fig:llm-bs}}. The results are aligned with the ViT results: ZO-Muon outperforms the baselines by a large margin with a decreased batch size. And ZO-Muon with a batch size of 4 outperforms S-MeZO and LOZO with a batch size of 16.

\begin{figure}[htb]
    \centering
    \begin{tabular}{cc}
        \includegraphics[width=0.3\columnwidth]{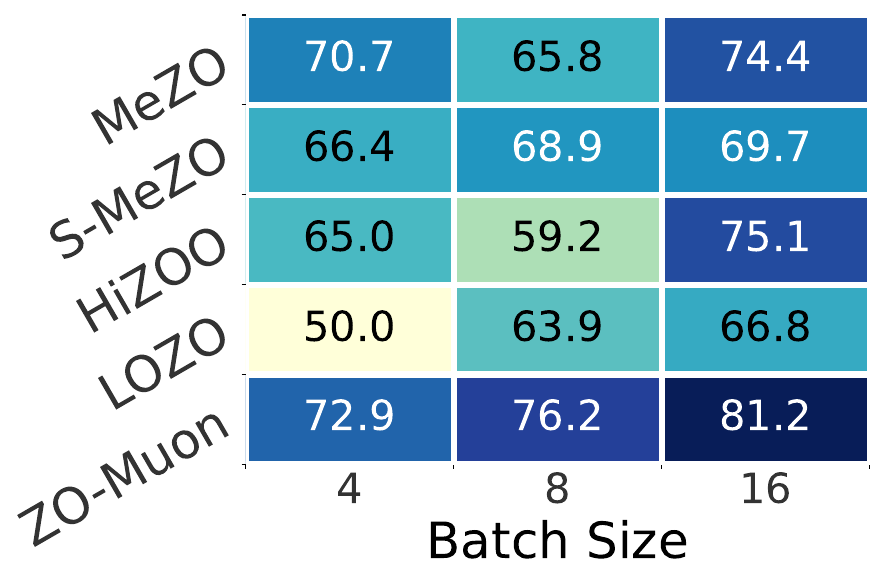} & 
        \includegraphics[width=0.3\columnwidth]{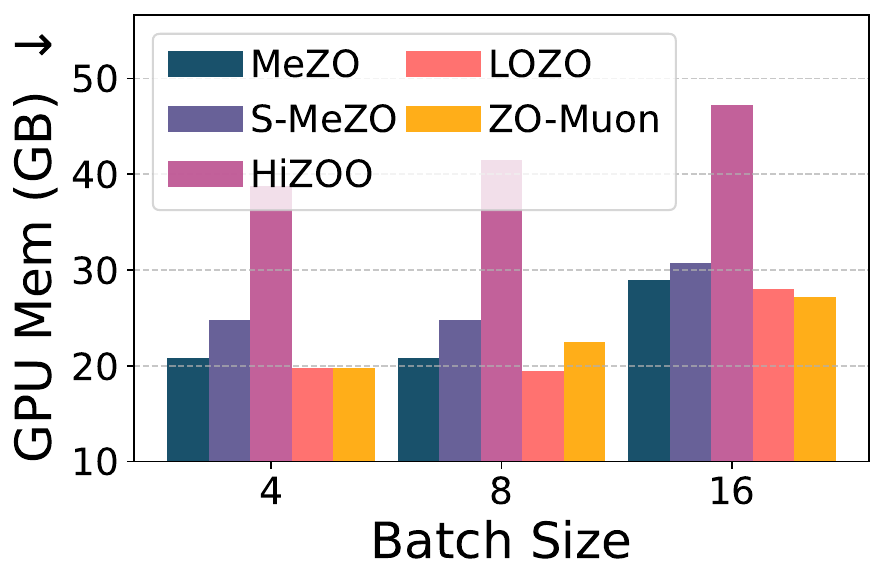} \\
        \footnotesize{(a) Fine-tuning accuracy} & 
        \footnotesize{(b) GPU memory} \\
    \vspace*{-2mm}
    \end{tabular}  
    \caption{Fine-tuning accuracy and GPU memory comparison under varying batch sizes. ZO-Muon versus baselines on LlaMA3-8B fine-tuning for RTE with the same number of iterations.}
    \label{fig:llm-bs}
\end{figure}

\end{document}